\documentclass[10pt,twocolumn,letterpaper]{article}

\usepackage{iccv}
\usepackage{times}
\usepackage{epsfig}

\usepackage{graphicx}
\usepackage{amsmath,amsbsy,amsfonts,amssymb,amsthm,units,bm,mathrsfs}
\usepackage{microtype} 
% microtypography
\usepackage{multirow}
\usepackage{array}
\newcommand{\partitle}[1]{\smallskip \noindent \textbf{#1.}}
\usepackage{microtype}
\usepackage{enumitem}
\usepackage{algorithm, algorithmic}

\usepackage{makecell}
% Include other packages here, before hyperref.

\usepackage[title]{appendix}
\usepackage{tabularx}
\usepackage{subfiles}

\newcolumntype{C}[1]{>{\centering\let\newline\\\arraybackslash\hspace{0pt}}m{#1}}
% \newcolumntype{K}[1]{>{\centering\arraybackslash}p{#1}}

\newtheorem{lemma}{Lemma}[section]
\newtheorem{proposition}{Proposition}[section]
\newtheorem{theorem}{Theorem}[section]

\theoremstyle{definition}

\newtheorem{definition}{Definition}[section]

\newtheorem{remark}{Remark}

% Jamie bar commend
\makeatletter
\newsavebox\myboxA
\newsavebox\myboxB
\newlength\mylenA

\newcommand*\xbar[2][0.75]{%
    \sbox{\myboxA}{$\m@th#2$}%
    \setbox\myboxB\null% Phantom box
    \ht\myboxB=\ht\myboxA%
    \dp\myboxB=\dp\myboxA%
    \wd\myboxB=#1\wd\myboxA% Scale phantom
    \sbox\myboxB{$\m@th\overline{\copy\myboxB}$}%  Overlined phantom
    \setlength\mylenA{\the\wd\myboxA}%   calc width diff
    \addtolength\mylenA{-\the\wd\myboxB}%
    \ifdim\wd\myboxB<\wd\myboxA%
       \rlap{\hskip 0.5\mylenA\usebox\myboxB}{\usebox\myboxA}%
    \else
        \hskip -0.5\mylenA\rlap{\usebox\myboxA}{\hskip 0.5\mylenA\usebox\myboxB}%
    \fi}
\makeatother

\newcommand{\tens}[1]{\boldsymbol{\mathscr{#1}}}
\newcommand{\vect}[1]{\ensuremath{\mathbf{#1}}}
\newcommand{\mat}[1]{\ensuremath{\mathbf{#1}}}

\newcommand{\argmax}{\mathop{\rm argmax}}

%%tensors X=L+E A=USV^H

\newcommand{\tL}{\tens{L}}

\newcommand{\tS}{\tens{S}}

\newcommand{\gH}{\mathbb{H}}
\newcommand{\disG}{\mathcal{N}_{\mathbb{H}}}
\newcommand{\prob}{\mathbb{P}}

\newcommand{\la}{\langle}
\newcommand{\ra}{\rangle}

\newcommand{\X}{\mat{X}}
\newcommand{\Y}{\mat{Y}}

\renewcommand{\H}{\mat{H}}

%%matrices X=L+E
% \newcommand{\mX}{\mat{X}}
% \newcommand{\mL}{\mat{L}}
% \newcommand{\mE}{\mat{E}}
% \newcommand{\mM}{\mat{M}}
% \newcommand{\mA}{\mat{A}}
% \newcommand{\mB}{\mat{B}}
% \newcommand{\mC}{\mat{C}}
% \newcommand{\mT}{\mat{T}}
% \newcommand{\mY}{\mat{Y}}

% %%matrices bar X=L+E
% \newcommand{\bmX}{\xbar{\mX}}
% \newcommand{\bmL}{\xbar{\mL}}
% \newcommand{\bmE}{\xbar{\mE}}
% \newcommand{\bmM}{\xbar{\mM}}
% \newcommand{\bmA}{\xbar{\mA}}

% \newcommand{\bmY}{\xbar{\mY}}
% \newcommand{\bmU}{\xbar{\U}}
% \newcommand{\bmV}{\xbar{\V}}
% \newcommand{\bmG}{\xbar{\G}}

% \newcommand{\hmM}{\widehat{\mM}}
% \newcommand{\hmY}{\widehat{\mY}}
% \newcommand{\hmA}{\widehat{\mA}}
% \newcommand{\hmU}{\widehat{\U}}
% \newcommand{\hmV}{\widehat{\V}}
% \newcommand{\hmB}{\widehat{\mB}}
% \newcommand{\hmT}{\widehat{\mT}}
% \newcommand{\hmG}{\widehat{\G}}

% \newcommand{\hmM}{\mM_\mathscr{F}}
% \newcommand{\hmY}{\mY_\mathscr{F}}
% \newcommand{\hmA}{\mA_\mathscr{F}}
% \newcommand{\hmU}{\U_\mathscr{F}}
% \newcommand{\hmV}{\V_\mathscr{F}}
% \newcommand{\hmB}{\mB_\mathscr{N}}
% \newcommand{\hmT}{\mT_\mathscr{F}}
% \newcommand{\hmG}{\G_\mathscr{F}}

\newcommand{\x}{\vect{x}}
\newcommand{\y}{\vect{y}}
\newcommand{\z}{\vect{z}}
\newcommand{\h}{\vect{h}}
\newcommand{\n}{\vect{n}}

\usepackage[breaklinks=true,bookmarks=false]{hyperref}

 \iccvfinalcopy % *** Uncomment this line for the final submission

 % *** Enter the ICCV Paper ID here

% Pages are numbered in submission mode, and unnumbered in camera-ready
\ificcvfinal\pagestyle{empty}\fi

\begin{document}

%%%%%%%%% TITLE
\title{Integer-arithmetic-only Certified Robustness for Quantized Neural Networks}

%%% 备选：Model compression with Certified Robustness/ Certified Robustness for Quantized Neural Networks

%%% more general title 
 %%% to avoid override claim 和不确定的结论
 %%% 多加一些故事感
%%% 多加一些细节： 多想想怎么浮现的，加在附录里面
%%% 尽量客观

% \author{Submission ID: 11222
% }

% \author{Haowen Lin\\
% University of Southern California\\
% Los Angeles, CA, USA\\
% {\tt\small haowenli@usc.edu}
% % For a paper whose authors are all at the same institution,
% % omit the following lines up until the closing ``}''.
% % Additional authors and addresses can be added with ``\and'',
% % just like the second author.
% % To save space, use either the email address or home page, not both
% \and
% Jian Lou\\
% Xidian University\\
% Guangzhou, China\\
% {\tt\small jlou@xidian.edu.cn}
% \and
% Li Xiong\\
% Emory University\\
% Atlanda, GA, USA\\
% {\tt\small lxiong@emory.edu}
% \and
% Cyrus Shahabi\\
% University of Southern California\\
% Los Angeles, CA, USA\\
% {\tt\small shahabi@usc.edu}
% }

\author{\fontsize{10.8pt}{\baselineskip}\selectfont Haowen Lin\textsuperscript{1}, Jian Lou\textsuperscript{2,3,}\thanks{Corresponding Author.}~, Li Xiong\textsuperscript{2}, Cyrus Shahabi\textsuperscript{1}\\
\fontsize{10pt}{\baselineskip}\selectfont \textsuperscript{1}University of Southern California\ \ \ \textsuperscript{2}Emory University\ \ \ \textsuperscript{3}Xidian University\\
{\tt\small haowenli@usc.edu\ \ jlou@xidian.edu.cn\ \ lxiong@emory.edu\ \ shahabi@usc.edu}
}

% \author{Haowen Lin, Jian Lou, Li Xiong, Cyrus Shahabi\\
% University of Southern California Emory University Xidian University\\
% {\tt\small \{haowenli,shahabi\}@usc.edu}, {\tt\small \{jian.lou,lxiong\}@emory.edu}, {\tt\small jlou@xidian.edu.cn}
% % For a paper whose authors are all at the same institution,
% % omit the following lines up until the closing ``}''.
% % Additional authors and addresses can be added with ``\and'',
% % just like the second author.
% % To save space, use either the email address or home page, not both
% }

\maketitle
% Remove page # from the first page of camera-ready.
\ificcvfinal\thispagestyle{empty}\fi

%%%%%%%%% ABSTRACT
\begin{abstract}

%%%% revision

Adversarial data examples have drawn significant attention from the
machine learning and security communities. A line of work on tackling adversarial examples is certified robustness via randomized smoothing that can provide a theoretical robustness guarantee. However, such a mechanism usually uses floating-point arithmetic for calculations in inference and requires large memory footprints and daunting computational costs. These defensive models cannot run efficiently on edge devices nor be deployed on integer-only logical units such as Turing Tensor Cores or integer-only ARM processors. To overcome these challenges, we propose an integer randomized smoothing approach with quantization to convert any classifier into a new smoothed classifier, which uses integer-only arithmetic for certified robustness against adversarial perturbations. We prove a tight robustness guarantee under $\ell_2$-norm for the proposed approach. We show our approach can obtain a comparable accuracy and $4 \times \sim 5 \times$ speedup over floating-point arithmetic certified robust methods on general-purpose CPUs and mobile devices on two distinct datasets (CIFAR-10 and Caltech-101).

\end{abstract}

%%%%%%%%% BODY TEXT

\section{Introduction}

Recent works in deep learning have demonstrated that well-trained deep neural networks can easily make wrong predictions with high confidence when a sample is perturbed with a small but adversarially-chosen noise \cite{szegedy2013intriguing,goodfellow2014explaining,papernot2016limitations}. To defend against
these attacks, several works have proposed to develop defensive
techniques and improve the
robustness of deep neural networks\cite{agarwal2019improving,luo2018towards,li2018robust,wang2021certified}. A recent promising line of work focuses on developing certifiably robust classifiers that promise no adversarially perturbed examples within a certified region can alter the classification result \cite{cohen2019certified,weng2018towards,lecuyer2019certified}. 
Such certified defenses provide a rigorous guarantee against norm-bounded perturbation attacks and, more importantly, ensure effectiveness under future stronger attacks \cite{cohen2019certified,li2018certified,lecuyer2019certified,tjeng2017evaluating}. One primary theoretical tool for providing the robustness guarantee is randomized smoothing, which derives a smoothed classifier from the base classifier via injecting designated noises, e.g., Gaussian noise. Multiple repeated inferences through the base classifier (i.e., Monte-Carlo estimation) are required to approximate the smoothed classification result for robustly predicting or certifying a single example.

Despite the promising results achieved by many certified robustness algorithms, existing methods almost exclusively focus on floating-point (FP) represented neural networks. However, 
the vastly adopted compressed neural network models are considered indispensable when one wishes to deploy the networks on storage-, computing resources- and power consumption-limited platforms such as edge devices, mobile devices, and embedded systems. In practice, one of the most successful and mainstream compression methods is quantization \cite{jacob2018quantization,dong2019hawq,wu2018mixed,zhang2018lq}. Quantization is a simple yet effective technique that compresses deep neural networks into smaller sizes by replacing model weights and activations from 32-bit floating-point (FP32) with low-bit precision, e.g., 8-bit integer (int8) \cite{zhang2018lq}. Both storage and computational complexity can be reduced using low-bit quantized neural networks \cite{jacob2018quantization}. Moreover, Jacob et al. \cite{jacob2018quantization} have proposed an integer-arithmetic-only quantization framework that further accelerates inference by using integer multiplication and accumulation for calculation. Performing inference using integer-arithmetic-only operations has several advantages in real application scenarios. For example, it resolves the limitation that floating-point networks cannot be deployed onto digital computing devices such as the recent Turing Tensor Cores or traditional integer-only ARM processors. Moreover, computing with integer arithmetic significantly reduces computing power, making them attractive for energy-constrained edge deployment and some cost-sensitive cloud data centers \cite{faraone2019addnet}.

\begin{figure}[t]
    \centering
    \includegraphics[width=0.55\linewidth]{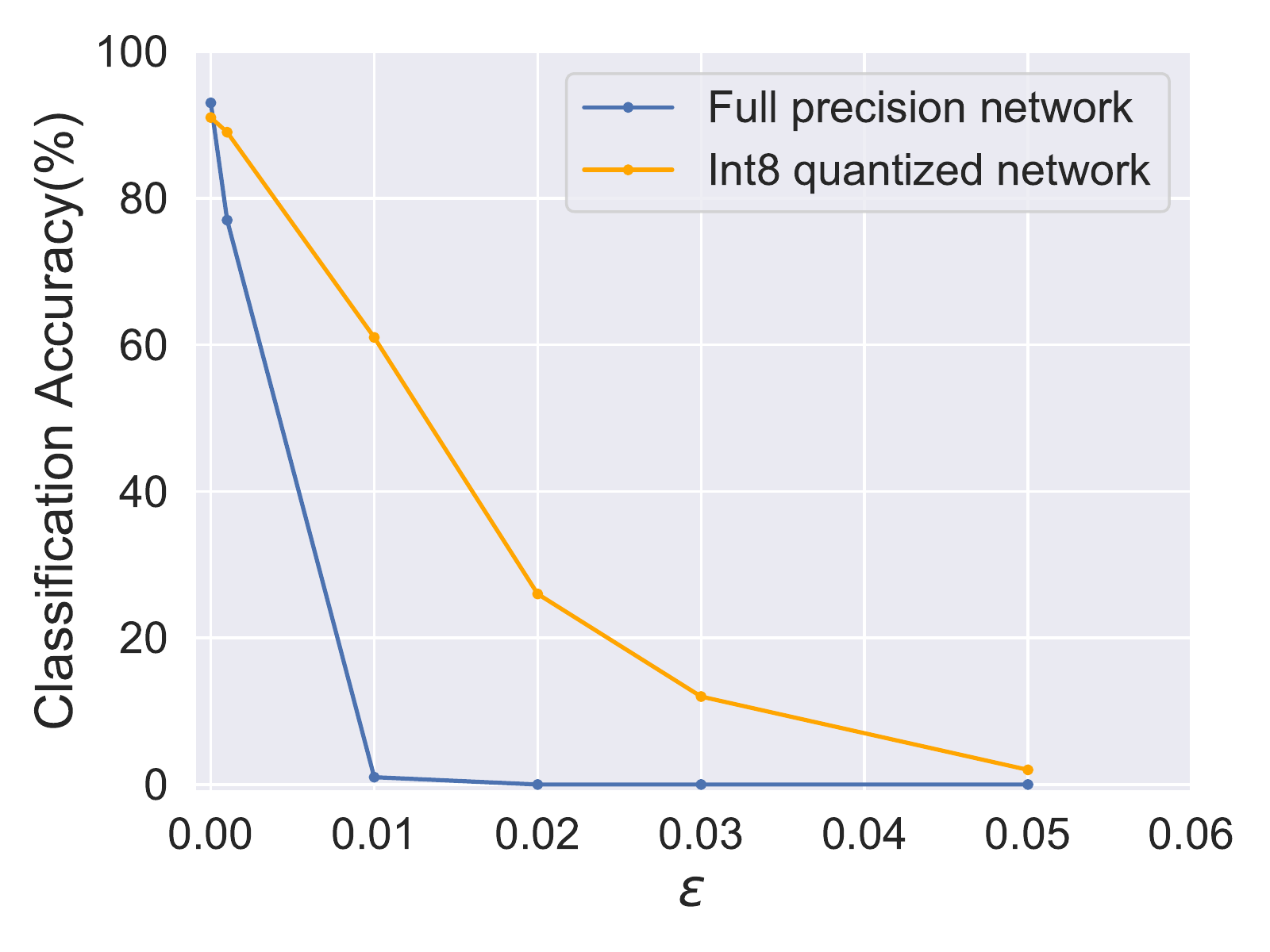}
    \caption{A demonstration of the adversarial perturbation attacks on undefended full precision network (blue) and 8-bit integer network (yellow). The x-axis represents the radius of the noise projected into clean images under $\ell_\infty$ ball (parameterized by $\varepsilon$). Given 100 clean images, the full precision and quantized network achieved 93\% and 91\% accuracy, respectively. When images are attacked by Projected Gradient Descent attack targeting the full precision model, the accuracy of both classifiers begins to drop. Details are deferred to Supplement.}
    \label{fig:attack}
    % \vspace{-1em}
\end{figure}

Given the under-studied situation of the certified robustness for quantized neural networks, the following research questions naturally arise: Q1. Are adversarial perturbations still effective on quantized neural networks? Q2. Can we reuse the current certified robustness defenses on quantized neural networks? Q3. If not, how can we design a certifiably robust defense making full considerations of the characteristics of quantized neural networks? Q1 and Q2 can be readily answered as follows, \\
\textbf{For Q1}, we consider the following demonstrating example. We generate adversarial perturbations using  Projected Gradient Descent attack \cite{madry2017towards} and inject the perturbations to 100 randomly selected clean images from CIFAR-10. The result is presented in Figure \ref{fig:attack}. Although the quantized network manifests slightly stronger robustness than the full precision model, adversarial perturbations can still sabotage the classification performance of the quantized neural network even when the perturbation noise is small. More severely, since the adversarial perturbation used in the example does not consider the characteristics of the quantized neural network as a priori, stronger attacks can be devised once such information is exploited. Thus, it is pressing for us to study certified robustness for quantized neural networks.\\
\textbf{For Q2}, a new certified robustness mechanism tailored to quantized neural networks is indeed necessary. The reason is that existing certified robustness methods rely on floating-point operations, which are incompatible with integer-arithmetic-only devices. Sometimes even when deployed on platforms that do support floating-point operations, a new integer-arithmetic-only certified robustness method can still be desirable due to the efficiency reason, especially considering that most certified robustness methods invoke repeated inferences to certify on a single example and incur large inference time.

\begin{figure}[t]
    \centering
    \includegraphics[width=0.95\linewidth]{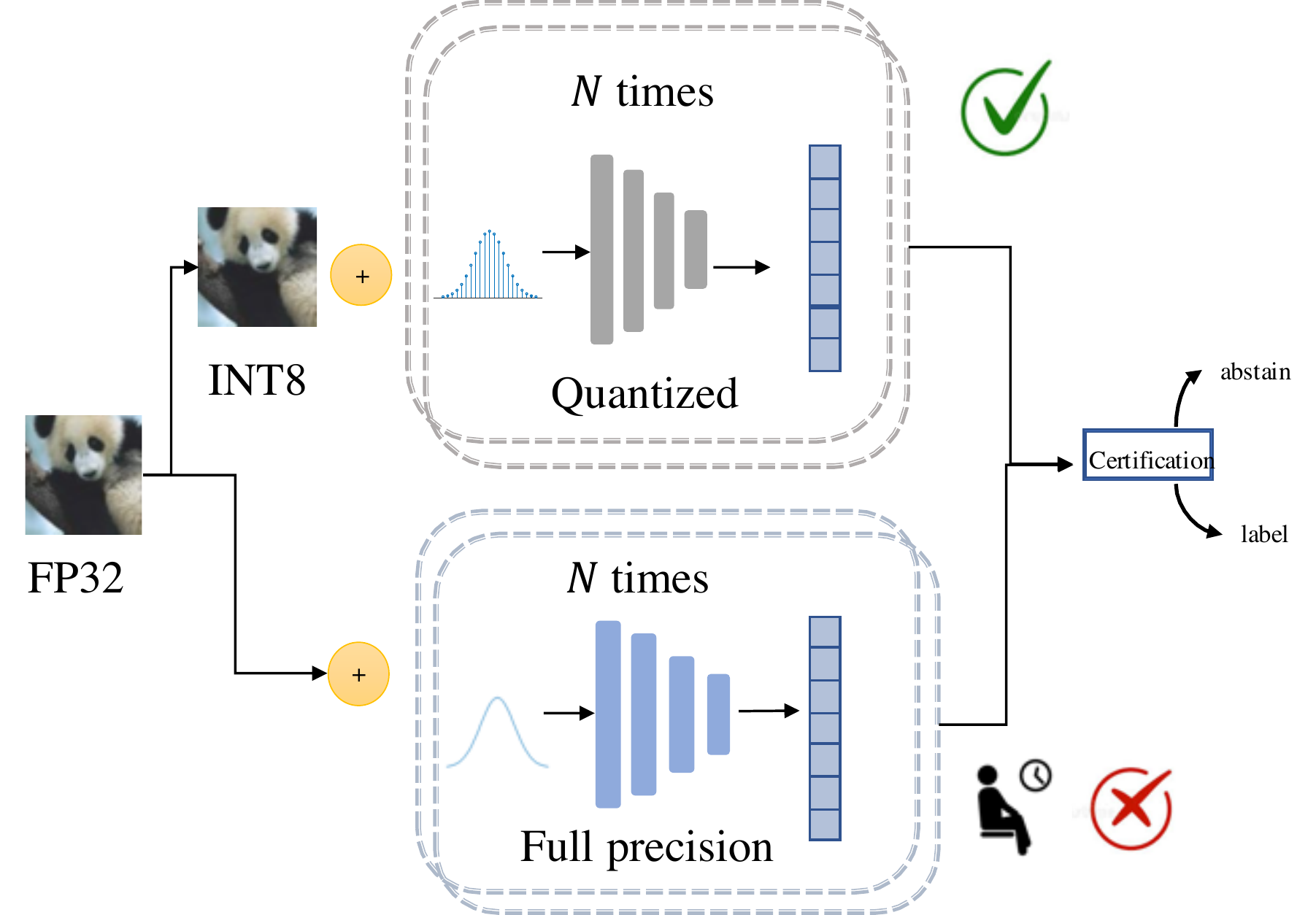}
    \caption{A comparison of certification procedure between IntRS and the floating-point network. Certification requires repeating inference N times. Compared with original floating-point network, IntRS achieves comparable certified accuracy with less inference time.}
    \label{fig:framework}
    % \vspace{-1em}
\end{figure}

% \begin{figure*}[htbp]
% \centering
% \begin{minipage}[t]{0.48\textwidth}
% \centering
% \includegraphics[width=6cm]{images_final/framework.pdf}
% \caption{World Map}
% \end{minipage}
% \begin{minipage}[t]{0.48\textwidth}
% \centering
% \includegraphics[width=6cm]{images_final/framework.pdf}
% \caption{Concrete and Constructions}
% \end{minipage}
% \end{figure*}

% We extend the Neyman-Pearson lemma to the discrete Gaussian distribution to prove certified robustness for discrete input and incorporate quantization aware-training to minimize quantization error. 

As a result, our main effort in this paper is to answer \textbf{Q3} with a novel and first integer-arithmetic-only certified robustness mechanism for quantized neural networks against adversarial perturbation attacks. An illustration of our framework and comparison with the existing certified robustness defenses is in Figure \ref{fig:framework}. In summary, we make the following contributions:
\begin{itemize}
    \item We devise a new integer-arithmetic-only randomized smoothing mechanism (abbreviated as IntRS) by incorporating the discrete Gaussian noise. More importantly, we rigorously calibrate the certified robustness behavior of IntRS in theory.
    % \item We prove the certified robustness guarantee in theory and 
    In practice, we introduce the quantization- and discrete data augmentation- aware as well as the common Monte-Carlo-based estimation.
    \item We perform experiments with 1) two different base neural network architectures with medium and large scales; 2) two different datasets; 3) two different types of computing devices (general-purpose computer and mobile device), which verify that IntRS achieves similar robustness and accuracy performance, and $4\times$ to $5\times$ efficiency improvement in the inference stage over existing floating-point randomized smoothing method for the original full precision neural networks. 
\end{itemize}

\section{Related Work}

\partitle{Certified Defenses} 
A certifiably robust classifier guarantees a stable classification for any input $\x$ within a certain range (e.g., within an $L_p$-norm ball) \cite{wong2018provable}. Certification methods usually fall into two categories: exact and conservative (relaxed). Given a base classifier $f$ and an input $\x$, exact certification techniques answer the question of whether or not there exists an adversarial perturbation for $f$ at $\x$. On the other hand, if the conservative algorithms make a certification, it guarantees that there is no perturbation exist, but it could refuse to make certification even for a clean data point. Exact methods, usually based on Mixed Integer Linear Programming \cite{tjeng2017evaluating,singh2019boosting,fischetti2017deep}, and Satisfiability Modulo Theories \cite{kurakin2016adversarial,athalye2018robustness,katz2017reluplex} can be intractable to compute and difficult to scale up to large and even moderate size neural networks. Conservative certification can be more scalable than exact methods, but it is still challenging to obtain a robust network in the large-scale setting and apply it to high-dimensional data.

\partitle{Randomized Smoothing}
To further scale up certified defenses on larger networks encountered in practice, randomized smoothing methods have
been proposed, which provide certified robust guarantee in theory and Monte-Carlo estimation-based certification and prediction in practice. Lecuyer et al. (PixelDP) first applied differential privacy to prove robustness guarantees for randomized smoothing classifier in the case of $\ell_0$- norm attack \cite{lecuyer2019certified}. Cohen et al. \cite{cohen2019certified} used Neyman-Pearson theorem to prove tight certified robustness bound through the smoothed classifier in $\ell_2$-norm certified defense. More recently, Dvijotham et al. \cite{dvijotham2020framework} extended randomized smoothing to handle arbitrary $\ell_p$ settings and prove the robustness using f-divergence. However, all these certified robustness analyses have relied on the continuous probability distribution, which incurs compatibility issues with the integer-arithmetic-only platforms. In addition, Monte-Carlo estimation requires a large number of repeated inferences during certification, which leads to huge computation time, especially for large neural networks. Finally, we note that there are existing works that propose randomized smoothing variants for discrete input domain \cite{DBLP:conf/nips/LeeYCJ19,bojchevski2020efficient,ye2020safer}, which are somewhat closer to our problem. However, they merely consider the discrete nature of the input space while still pay little attention to neither applicability to the quantized neural networks nor to the integer-arithmetic-only restriction of many applications with quantized neural networks.

\partitle{Model Compression: Quantization}
Quantization aims to compress a network and save computation by changing the weights and activation of a neural network from 32-bit floating-point representation into lower bit-depth representations. The weights can be quantized to 16-bit \cite{gupta2015deep,das2018mixed}, 8-bit \cite{jacob2018quantization,wang2020apq}, 4-bit \cite{wang2020towards,zhou2017balanced} or even with 1-bit (also known as binary quantization) \cite{hubara2016binarized,zhou2016dorefa}. Existing techniques can be roughly divided into two categories: quantization during training and quantization for inference. Early quantization during training is applied to reduce the network size and make the training process more computational efficient \cite{choudhary2020comprehensive}. Guo et al. \cite{guo2016dynamic} applied vector quantization to compress the weighing matrix. On the contrary, quantization for inference mainly focuses on accelerating inference speed. For example, Han et al. \cite{han2015deep} proposed a three-stage pipelin: pruning, quantization, and fine-tuning to determine the bit widths for convolution layers \cite{han2015deep}. A more recent quantization scheme has been proposed in \cite{jacob2018quantization}, which inserts ``fake'' quantization operations during training using floating-point operations and builds quantized parameters during the inference stage. It simulates and decreases quantization error while performing efficient inference on integer-arithmetic-only hardware. We will further extend this quantization scheme and incorporate it into our certified robust networks.

\partitle{Quantization with Robust Training} 
Several works try to address the challenge of training deep neural networks on security-critical and resource-limited applications. Gui et al. \cite{gui2019model} construct constrained optimization formulation to integrate various existing compression techniques (e.g., pruning, factorization, quantization) into adversarially robust models. However, their approach is only applied to defend specific adversarial attacks and does not provide certified robustness guarantees. Another related line of research aims to connect model
compression with the robustness goal through pruning \cite{sehwag2020hydra,ye2019adversarial, sehwag2019towards}. However, most of the works focus on empirical adversarial defenses, and none of them support deployment with integer-arithmetic-only platforms. 

% \subsection{Quantization and Quantization Aware Training}

\section{Proposed Method}

In this section, we present a new certified robustness defense against adversarial perturbation attack under the integer-arithmetic-only context.

\subsection{Preliminaries}
First, we formally define the adversarial perturbation attack and certified robustness under the integer-arithmetic-only context, as follows.
\begin{definition} ($\ell_2$-Norm Bounded Integer Adversarial Perturbation Attack)
For any input $\x\in \gH^d$, where $\gH$ is a discrete additive subgroup, the $\ell_2$-norm bounded integer adversarial perturbation attack with magnitude $L$ perturbs $\x$ to $\x+\bm{\delta}$ in order to alter the classification result, where $\bm{\delta}\in\gH^d$ and $\|\bm{\delta}\|_2^2 \leq L^2$. Denote all possible $\x+\bm{\delta}$ within $L$ distance by $\mathbb{B}_{\gH}(\x,L)$.
\end{definition}

\begin{definition} (Certified Robustness to $\ell _2$-Norm Bounded Integer Adversarial Example Attack with magnitude $L$)
Denote a multiclass classification model by $f(\x): \mathcal{X} \mapsto c\in \mathcal{C}$, where $c$ is a label in the labels set $\mathcal{C} = \{1,...,C\}$. In general, $f(\x)$ outputs a vector of scores $f^{\y}(\x) = (f^{y_1},...,f^{y_C})\in\mathcal{Y}$, where $\mathcal{Y}=\{\y: \sum _{i=1}^C f^{y_i} = 1, f^{y_i}\in [0,1]\}$, and $c=\argmax_{i\in\mathcal{C}} f^{y_i}$. A predictive model $f(\x)$ is robust to $\ell _2$-norm integer adversarial perturbation attack with magnitude $L$ on input $\x$, if for all $\x'\in \mathbb{B}_{\H}(\x,L)$, it has $f(\x)=f(\x')$, which is equivalent to 
\begin{equation}
    f^{y_{c}}(\x') > \max_{i\in \mathcal{C}: i\neq c} f^{y_i}(\x').
\end{equation}
\end{definition}

The following lemma is utilized in the proof of the certified robustness guarantee.
\begin{lemma} (Neyman-Pearson Lemma \cite{neyman1933ix})
\label{lemma.discrete.Neyman.Pearson}
Let $X, Y$ be random variables in $\gH^d$ with probability mass function $p_X, p_Y$. Let $h:\gH^d \mapsto \{0,1\}$ be a random or deterministic function. It has:\\
1) If $\tL(\z)=\{\z\in\gH^d: \frac{p_Y(\z)}{p_X(\z)}\leq \alpha\}$ for some $\alpha >0$ and $\prob(h(X)=1) \geq \prob(X\in \tL(\z))$, then $\prob(h(Y)=1)\geq \prob(Y\in\tL(\z))$;\\
2) If $\tL(\z)=\{\z\in\gH^d: \frac{p_Y(\z)}{p_X(\z)}\geq \alpha\}$ for some $\alpha >0$ and $\prob(h(X)=1) \leq \prob(X\in \tL(\z))$, then $\prob(h(Y)=1)\leq \prob(Y\in\tL(\z))$.
\end{lemma}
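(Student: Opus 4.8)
The plan is to adapt the classical Neyman--Pearson argument to the discrete (probability-mass) setting, reducing the possibly randomized test $h$ to a deterministic acceptance field. First I would encode $h$ by the scalar function $\phi(\z) \defeq \prob(h(\z)=1) \in [0,1]$, so that the two quantities of interest become linear functionals of $\phi$ against the respective mass functions: $\prob(h(X)=1)=\sum_{\z}\phi(\z)\,p_X(\z)$ and $\prob(h(Y)=1)=\sum_{\z}\phi(\z)\,p_Y(\z)$. Writing $\mathbf{1}[\z\in\tL]$ for the indicator of the likelihood-ratio region, the goal in case 1) is to show that the scalar $\prob(h(Y)=1)-\prob(Y\in\tL)=\sum_{\z}\bigl(\phi(\z)-\mathbf{1}[\z\in\tL]\bigr)p_Y(\z)$ is nonnegative, given the analogous hypothesis under $X$.

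The core step is a pointwise comparison of this integrand against its $p_X$-weighted counterpart, carried out separately on $\tL$ and on its complement. On $\tL$ the defining inequality gives $p_Y(\z)\leq\alpha\,p_X(\z)$, while $\phi(\z)-\mathbf{1}[\z\in\tL]=\phi(\z)-1\leq 0$; multiplying a nonpositive factor through the inequality on the weights reverses its direction, yielding $(\phi(\z)-1)p_Y(\z)\geq\alpha(\phi(\z)-1)p_X(\z)$. On the complement $p_Y(\z)>\alpha\,p_X(\z)$ and $\phi(\z)-\mathbf{1}[\z\in\tL]=\phi(\z)\geq 0$, so $\phi(\z)p_Y(\z)\geq\alpha\,\phi(\z)p_X(\z)$. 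In both cases one lands on the single pointwise bound $\bigl(\phi(\z)-\mathbf{1}[\z\in\tL]\bigr)p_Y(\z)\geq\alpha\bigl(\phi(\z)-\mathbf{1}[\z\in\tL]\bigr)p_X(\z)$.

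Summing this pointwise inequality over all $\z\in\gH^d$ collapses it into $\prob(h(Y)=1)-\prob(Y\in\tL)\geq\alpha\bigl(\prob(h(X)=1)-\prob(X\in\tL)\bigr)$, and since $\alpha>0$ together with the hypothesis $\prob(h(X)=1)\geq\prob(X\in\tL)$, the right-hand side is nonnegative, which is exactly the claim of part 1). Part 2) would then follow from the same template after flipping every inequality: on $\tL=\{p_Y/p_X\geq\alpha\}$ the sign bookkeeping reverses, the hypothesis becomes $\prob(h(X)=1)\leq\prob(X\in\tL)$, and the summed bound delivers the reversed conclusion.

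I expect the main obstacle to be the sign/direction bookkeeping in the pointwise step---tracking how the sign of $\phi(\z)-\mathbf{1}[\z\in\tL]$ interacts with whether $p_Y(\z)$ exceeds or falls below $\alpha\,p_X(\z)$---rather than any genuine analytic difficulty. The only technical care needed is for the degenerate atoms where $p_X(\z)=0$ (the ratio is formally $+\infty$, so $\z\notin\tL$ in case 1), and these contribute a nonnegative $p_Y$-term consistent with the bound) and the boundary $p_Y(\z)/p_X(\z)=\alpha$, which is absorbed cleanly by the non-strict inequality defining $\tL$. Because the domain is discrete, the summation exchange is unconditional, so no convergence or measurability issues arise beyond recording these cases.
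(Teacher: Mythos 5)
Your proof is correct, and it is worth noting that the paper does not actually prove this lemma at all: it imports it by citation to Neyman--Pearson (1933) and uses it as a black box in the proof of Proposition 3.1 (Appendix H.1), where only the computation of the discrete-Gaussian likelihood ratio is carried out before the paper writes ``the remaining follows Lemma \ref{lemma.discrete.Neyman.Pearson}.'' Your argument supplies the missing content: encoding the randomized test as $\phi(\z)\defeq\prob(h(\z)=1)\in[0,1]$ (with $h$'s internal randomness independent of $X,Y$, so that $\prob(h(X)=1)=\sum_{\z}\phi(\z)p_X(\z)$), proving the single pointwise inequality $\bigl(\phi(\z)-\mathbf{1}[\z\in\tL]\bigr)p_Y(\z)\geq\alpha\bigl(\phi(\z)-\mathbf{1}[\z\in\tL]\bigr)p_X(\z)$ by the sign split on $\tL$ versus its complement, and summing to get $\prob(h(Y)=1)-\prob(Y\in\tL)\geq\alpha\bigl(\prob(h(X)=1)-\prob(X\in\tL)\bigr)\geq 0$. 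This is the canonical Neyman--Pearson argument (it mirrors the proof of the analogous lemma in Cohen et al.\ \cite{cohen2019certified}, with integrals replaced by sums), and your treatment of the two delicate points is sound: atoms with $p_X(\z)=0$ fall in the complement under the $+\infty$ convention and contribute a nonnegative term, and the non-strict inequality defining $\tL$ absorbs the boundary ratio $p_Y/p_X=\alpha$. What your route buys over the paper's citation is self-containedness in exactly the setting the paper needs --- probability mass functions on $\gH^d$ and a possibly randomized $h$ --- which the classical continuous-density statement does not literally cover; what the citation buys is brevity. Part 2) does follow by the flipped bookkeeping as you claim, with the $p_X(\z)=0$ atoms now landing inside $\tL$, where $(\phi(\z)-1)p_Y(\z)\leq 0$ keeps the reversed pointwise bound valid.
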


\subsection{Integer Randomized Smoothing for Quantized Neural Networks}

We propose the integer randomized smoothing (abbreviated as IntRS) for quantized neural networks to ensure certified robustness against $L$-$\ell _2$-norm bounded integer adversarial perturbation, which involves only integer arithmetic operations. For this purpose, we utilize the following \emph{discrete} Gaussian random noise.
\begin{definition} (Discrete Gaussian Distribution) The discrete Gaussian distribution $\disG(\mu,\sigma ^2)$ is a probability distribution supported on $\gH$ with location $\mu$ and scale $\sigma$. Its probability mass function is defined as follows,
\begin{equation}
    \label{eq.disGaussian.mass}
    \prob_{X\sim \disG(\mu,\sigma ^2)}[X=x] = \frac{e^{-(x-\mu)^2/2\sigma ^2}}{\sum _{h\in \gH}e^{-(h-\mu)^2/2\sigma ^2}}.
\end{equation}
\end{definition}
The discrete Gaussian noise-based integer randomized smoothing (IntRS) is defined as follows. 
\begin{definition} (Integer Randomized Smoothing with Discrete Gaussian Noise)
For an arbitrary base quantized classifier defined on $\gH^{d}$, for any input $\x \in \gH^{d}$, the smoothed classifier $g(\x)$ is defined as 
\begin{equation}
\label{eq.discrete.gaussian}
    g(\x) = \argmax_{c\in\mathcal{C}} \prob(f(\x+\bm{\delta})=c),~ \bm{\delta}\sim \disG(0,\sigma ^2 \bm{I}_{d}).
\end{equation}
\end{definition}
\begin{remark}
One may wonder why we cannot re-utilize the existing randomized smoothing that relied on continuous Gaussian noise (e.g., \cite{cohen2019certified}) by rounding the noise for quantized neural networks. There are two reasons: 1) Sampling and injecting continuous Gaussian noise still requires float-number operations, which raises the compatibility issue; 2) Although 1) can be addressed via pre-storing rounded continuous Gaussian noises, the rounding error introduces randomness that is difficult to precisely calibrate, which contradicts our goal to guarantee certified robustness rigorously. To form a theoretical proof, we utilize the discrete Gaussian noise, whose distribution is precisely described by eq.(\ref{eq.discrete.gaussian}).
\end{remark}
Although the IntRS mechanism parallels the randomized smoothing in the full precision context, it is nontrivial to establish its certified robustness guarantee. A simpler yet \emph{indirect} way is to first regard IntRS as a differential privacy mechanism \cite{canonne2020discrete} and then establish the certified robustness in a similar manner as PixelDP \cite{lecuyer2019certified}. We take another \emph{direct} way, which provides a tighter certified robustness guarantee. The key to our certified robustness guarantee of the IntRS mechanism is by extending the Neyman-Pearson lemma to the discrete Gaussian distribution, as summarized by the following Proposition.
\begin{proposition} (Neyman-Pearson for Discrete Gaussian with Different Means)
    For $\X\sim \disG(\x,\sigma ^2\bm{I}_{d})$, $\Y\sim \disG(\x+\bm{\delta},\sigma ^2\bm{I}_{d})$, and $h:\gH^d \mapsto \{0,1\}$, it has:\\ 1) If $\tL(\z) = \{\z\in\gH^d: \la \z,\bm{\delta} \ra \leq \sigma^2\ln \alpha + \frac{1}{2}(\|\bm{\delta}\|_2^2+2\la \x,\bm{\delta} \ra)\}$ for some $\alpha$ and $\prob(h(\x)=1) \geq \prob(\x\in\tL(\z))$, then $\prob(h(\y)=1) \geq \prob(\y\in\tL(\z))$;\\ 2) If $\tL(\z) = \{\z\in\gH^d: \la \z,\bm{\delta} \ra \geq \sigma^2\ln \alpha + \frac{1}{2}(\|\bm{\delta}\|_2^2+2\la \x,\bm{\delta} \ra)\}$ for some $\alpha$ and $\prob(h(\x)=1) \leq \prob(\x\in\tL(\z))$, then $\prob(h(\y)=1) \leq \prob(\y\in\tL(\z))$.
\end{proposition}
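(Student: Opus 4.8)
The plan is to reduce the Proposition to the abstract Neyman-Pearson Lemma (Lemma~\ref{lemma.discrete.Neyman.Pearson}) by computing the likelihood ratio $p_Y(\z)/p_X(\z)$ in closed form and verifying that the likelihood-ratio regions appearing there coincide exactly with the half-spaces $\tL(\z)$ stated here. First I would exploit the diagonal covariance $\sigma^2\bm{I}_{d}$ to factor the multivariate discrete Gaussian pmf over coordinates,
\begin{equation}
p_X(\z) = \prod_{i=1}^d \frac{e^{-(z_i-x_i)^2/2\sigma^2}}{Z(x_i)}, \qquad Z(\mu)=\sum_{h\in\gH} e^{-(h-\mu)^2/2\sigma^2},
\end{equation}
and likewise for $p_Y$ with mean $\x+\bm{\delta}$.

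The key observation, and the step I expect to be the main obstacle, is the cancellation of the per-coordinate normalizers $Z(\cdot)$ in the ratio. For continuous Gaussians this is automatic by translation invariance of Lebesgue measure, but in the discrete setting it holds only because the means are lattice points: since $\x\in\gH^d$ and $\bm{\delta}\in\gH^d$ with $\gH$ an additive subgroup, the reindexing $h\mapsto h-\mu$ is a bijection of $\gH$ whenever $\mu\in\gH$, so $Z(x_i)=Z(x_i+\delta_i)=\sum_{h\in\gH}e^{-h^2/2\sigma^2}$ is independent of the mean. I would state this lattice translation-invariance explicitly, since it is precisely what makes the discrete analysis valid and what a naive rounding of continuous noise (cf.\ the Remark) would destroy.

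With the normalizers cancelled, expanding $(z_i-x_i-\delta_i)^2-(z_i-x_i)^2=\delta_i^2-2(z_i-x_i)\delta_i$ and summing over $i$ yields the closed form
\begin{equation}
\frac{p_Y(\z)}{p_X(\z)} = \exp\!\left(\frac{1}{\sigma^2}\Big(\la\z,\bm{\delta}\ra-\la\x,\bm{\delta}\ra-\tfrac{1}{2}\|\bm{\delta}\|_2^2\Big)\right),
\end{equation}
which is strictly increasing in the scalar $\la\z,\bm{\delta}\ra$. Consequently the sublevel set $\{\z:p_Y(\z)/p_X(\z)\le\alpha\}$ is exactly the half-space $\{\z:\la\z,\bm{\delta}\ra\le \sigma^2\ln\alpha+\tfrac{1}{2}(\|\bm{\delta}\|_2^2+2\la\x,\bm{\delta}\ra)\}$, i.e.\ the set $\tL(\z)$ of part~1); by the same monotonicity the superlevel set $\{p_Y/p_X\ge\alpha\}$ matches $\tL(\z)$ of part~2).

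To finish, I would invoke Lemma~\ref{lemma.discrete.Neyman.Pearson} with $X=\X$ and $Y=\Y$: part~1) of that lemma applied to the sublevel set gives the first implication, and part~2) applied to the superlevel set gives the second, with the present threshold $\alpha$ identified through the displayed ratio. The only minor care needed is treating the boundary $\{p_Y/p_X=\alpha\}$ consistently, which is handled by the non-strict inequalities in the definition of $\tL(\z)$.
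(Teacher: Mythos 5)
Your proof is correct and takes essentially the same route as the paper's: factor the pmf over coordinates, cancel the per-coordinate normalizers using the group structure of $\gH$ (the paper phrases this as periodicity of $\sum_{h\in\gH}e^{-(h-\mu)^2/2\sigma^2}$ under shifts by $\bm{\delta}_i\in\gH$), reduce the likelihood ratio to a monotone function of $\la \z,\bm{\delta}\ra$ so that the sublevel/superlevel sets are exactly the stated half-spaces, and then invoke the discrete Neyman--Pearson lemma. Your explicit justification of the cancellation via the bijection $h\mapsto h-\mu$ is a slightly more careful rendering of the same step.
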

\begin{proof}
    Proof can be found in Appendix H.1.
\end{proof}

The following Theorem provides the certified robustness to the $\ell_2$-norm bounded integer adversarial perturbation achieved by IntRS.
\begin{theorem} (Certified Robustness via Integer Randomized Smoothing) 
\label{lemma.np.discrete.gaussian}
Let $f: \mathbb{R}^d \mapsto \mathcal{Y}$ be the base classifier. Let $\n\sim\disG(0,\sigma^2 \bm{I}_d)$. Denote the randomized smoothed classifier by $g(\x) = \arg\max_{c\in \mathcal{Y}} \prob[f(\x+\n)=c]$. If there exists $c_A \in \mathcal{Y}$ such that the following relation stands
\begin{equation}
    \prob[f(\x+\n)=c_A] \geq p^{lb}_{c_A} \geq p^{ub}_{c_B} \geq \max_{c_B:c\neq c_A} \prob[f(\x+\n)=c],
\end{equation}
then the randomized smoothed classifier $g(\x)$ is certified robust to perturbation $\|\bm{\delta}\|_2^2 \leq R^2$, i.e., $g(x+\bm{\delta})=c_A$, where $R$ is the certified radius.
\label{thm:Certified robustness via Integer Gaus}
\end{theorem}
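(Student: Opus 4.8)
The plan is to mirror the continuous randomized-smoothing argument of \cite{cohen2019certified}, replacing the continuous Neyman--Pearson lemma by the discrete version established in the preceding Proposition. Writing $\X\sim\disG(\x,\sigma^2\bm{I}_d)$ and $\Y\sim\disG(\x+\bm{\delta},\sigma^2\bm{I}_d)$, I first observe that certifying $g(\x+\bm{\delta})=c_A$ for every admissible perturbation reduces to two one-sided estimates: a lower bound on $\prob[f(\Y)=c_A]$ and an upper bound on $\prob[f(\Y)=c]$ for each competing class $c\neq c_A$. Since $p^{ub}_{c_B}$ dominates the probability of every competitor, it suffices to carry out the upper-bound argument once, and the robustness condition becomes the requirement that the lower bound on class $c_A$ strictly exceed the upper bound on the runner-up, i.e. $g(\x+\bm{\delta})=\argmax_c\prob[f(\x+\bm{\delta}+\n)=c]=c_A$.

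For the lower bound I take $h(\z)$ to be the indicator of the event $f(\z)=c_A$ and invoke part~1 of the Proposition. The Proposition's half-space description of $\tL(\z)=\{\z:\la\z,\bm{\delta}\ra\le\sigma^2\ln\alpha+\tfrac12(\norm{\bm{\delta}}_2^2+2\la\x,\bm{\delta}\ra)\}$ is available precisely because the perturbation $\bm{\delta}$ lies in the lattice $\gH^d$: a lattice shift of the mean leaves the normalizing sum in the mass function of $\disG$ invariant, so the likelihood ratio $p_Y/p_X$ collapses to the exponential-of-linear form whose sublevel sets are exactly these half-spaces. I then select the threshold $\alpha$ so that $\prob[\X\in\tL(\z)]=p^{lb}_{c_A}$; because the discrete cumulative distribution can jump, I use the randomized form of $h$ permitted by the lemma to hit this value exactly. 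Part~1 then yields $\prob[f(\Y)=c_A]\ge\prob[\Y\in\tL(\z)]$.

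To evaluate the two half-space probabilities I pass to the one-dimensional marginal $\la\n,\bm{\delta}\ra$ with $\n\sim\disG(0,\sigma^2\bm{I}_d)$, whose cumulative distribution function I denote by $\psi$. A membership event that reads $\{\la\n,\bm{\delta}\ra\le t\}$ under $\X$ becomes $\{\la\n,\bm{\delta}\ra\le t-\norm{\bm{\delta}}_2^2\}$ under $\Y$, so the relevant threshold is shifted by exactly $\norm{\bm{\delta}}_2^2$. This turns the lower bound into $\psi$ evaluated at $\psi^{-1}(p^{lb}_{c_A})-\norm{\bm{\delta}}_2^2$, and the symmetric application of part~2 of the Proposition turns the upper bound on $\prob[f(\Y)=c_B]$ into the corresponding tail probability with its threshold shifted by $\norm{\bm{\delta}}_2^2$ in the opposite direction. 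Requiring the former to strictly exceed the latter defines the certified radius $R$ as the largest $\norm{\bm{\delta}}_2$ for which the inequality persists.

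I expect the main obstacle to be the discreteness itself. Unlike the Gaussian case, the marginal $\la\n,\bm{\delta}\ra$ is a weighted lattice sum rather than a rescaled standard normal, so there is no closed-form $\Phi^{-1}$ and the radius must be stated implicitly through $\psi$ (which in general depends on the \emph{direction} of $\bm{\delta}$, not merely its norm). Two further technical points require care: justifying the exact matching of the threshold to $p^{lb}_{c_A}$ via randomization when $\psi$ carries atoms, and confirming the lattice-shift invariance of the normalizer, which is the property that makes the half-space characterization of the Proposition applicable and is guaranteed by the threat model's assumption $\bm{\delta}\in\gH^d$.
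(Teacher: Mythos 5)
Your proposal is correct and follows essentially the same route as the paper's own proof: both mirror Cohen et al.'s argument, invoking the discrete Neyman--Pearson Proposition with half-space level sets (available precisely because the lattice shift $\bm{\delta}\in\gH^d$ leaves the normalizer of $\disG$ invariant) whose $\X$-probabilities are matched to $p^{lb}_{c_A}$ and $p^{ub}_{c_B}$, and then requiring the resulting bounds on $\prob[f(\Y)=c_A]$ and $\prob[f(\Y)=c_B]$ to preserve the ordering, which yields the radius constraint. If anything, you are more careful than the paper on two points it silently glosses over: the need for randomized tests (or conservative rounding) to hit $p^{lb}_{c_A}$ exactly when the discrete CDF has atoms, and the fact that the marginal $\la\n,\bm{\delta}\ra$ depends on the direction of $\bm{\delta}$, so the radius is only implicitly defined rather than given by a closed-form $\Phi^{-1}_{\disG}$ expression as the paper's proof suggests.
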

% \begin{proof}
%     Proof can be found in Appendix H.2
% \end{proof}

\begin{proof}
    Let $\X = \x+\n \sim \disG(\x,\sigma^2\mathbb{I})$, $\Y = \x+\n + \bm{\delta} \sim \disG(\x+\bm{\delta},\sigma^2\bm{I}_d)$. To show that $g(\x+\bm{\delta})=c_A$, we first prove 
    \begin{equation}
    \begin{split}
        & \prob[f(\x+\n+\bm{\delta})=c_A] > \prob[f(\x+\n+\bm{\delta})=c_B]\\
        & \Longleftrightarrow \prob[f(\Y)=c_A] > \prob[f(\Y)=c_B],
    \end{split}
    \end{equation}
    given that $\prob[f(\X)=c_A] \geq p^{lb}_{c_A} \geq p^{ub}_{c_B} \geq \prob[f(\X)=c_B]$. Notice that that $p^{lb}_{c_A} = \prob[X\in \tS_A]$, where $\tS_A = \{\z:\la\z-\x,\bm{\delta}\ra \leq \sigma \|\bm{\delta}\|_2 \Phi^{-1}_{\disG}(p^{lb}_{C_A})\}$ and $p^{ub}_{c_B} = \prob[X\in \tS_B]$, where $\tS_B = \{\z:\la\z-\x,\bm{\delta}\ra \geq \sigma \|\bm{\delta}\|_2 \Phi^{-1}_{\disG}(1-p^{lb}_{C_B})\}$. Then, by Lemma \ref{lemma.np.discrete.gaussian}, we have
    \begin{gather}
        \prob[f(\Y)=c_A] \geq \prob[\Y\in \tS_A]  = p^{lb}_{c_A}\\
        \prob[f(\Y)=c_B] \leq \prob[\Y\in \tS_B]  = p^{ub}_{c_B},
    \end{gather}
    if $\delta$ satisfies the constraint:
    \begin{equation}
        \|\delta\|_2^2 < (\frac{\sigma}{2}(\Phi^{-1}_{\disG}(p_{c_A}^{lb}) - \Phi^{-1}_{\disG}(p_{c_B}^{ub})))^2.
    \end{equation}
    Thus, we have proved $\prob[f(\Y)=c_A] 
    \geq p^{lb}_{c_A} \geq p^{ub}_{c_B} \geq \prob[f(\Y)=c_B]$.
\end{proof}

\partitle{Practical IntRS} As with all randomized smoothing methods, we resort to Monte-Carlo estimation to approximate the randomized smoothing function $g(\x)$, because it is challenging to obtain the precise $g(\x)$. Essentially, for certification on example $\x$, we repeatedly evaluate $f(\x+\n)$, $n\sim\disG(0,\sigma^2 \bm{I}_d)$, and obtain robust prediction or certification result based on the aggregated evaluations. Algorithm \ref{alg.certification.prediction} summarizes the robust certification, which follows \cite{cohen2019certified}.

\begin{algorithm}[h]
\caption{Monte-Carlo estimation and aggregated evaluation for certified robust certification}
\label{alg.certification.prediction}
\begin{algorithmic}[1]
\REQUIRE Base function $f(\cdot)$, inference sample $\x$, Gaussian noise std $\sigma$, repeated number $N_1$, $N_2$, and confidence level $\alpha$. \\

\textbf{Certification:}
%\NoNumber{\textbf{Certification:}}
\STATE Repeat $N_1$ inferences on $f(\x+\n)$, where $\n\sim \disG(0,\sigma^2 \bm{I}_d)$.
\STATE Collect prediction results: $(n_1,\hat{c}_A):$ highest prediction count and its label;
\STATE Repeat $N_2$ inferences on $f(\x+\n)$, where $\n\sim \disG(0,\sigma^2 \bm{I}_d)$.
\STATE Collect prediction results: $n_2:$ count of $\hat{c}_A$;
\STATE Let $\underline{p_A}$ be the one-sided $(1-\alpha)$ lower confidence interval for ${\tt Binomial}(N_2,n_2)$;
\IF{$\underline{p_A}>\frac{1}{2}$}
\STATE \textbf{Return} $\hat{c}_A$ and radius $\sigma\Phi^{-1}(\underline{p_A})$;
\ELSE \STATE ABSTAIN
\ENDIF
\end{algorithmic}
\label{algoritm:certification}
\end{algorithm}

The following Theorem and its proof summarize that the practical prediction and certification of IntRS can be carried out via integer-arithmetic-only operations.
\begin{theorem} 
    The prediction and certification procedure of the practical IntRS can be executed with integer-arithmetic-only operations.
\end{theorem}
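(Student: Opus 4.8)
The plan is to verify, operation by operation, that everything executed along the repeated-inference loop of Algorithm~\ref{alg.certification.prediction} can be realized with integers, and then to argue that the remaining one-shot statistical post-processing depends only on integer counts, so it can be tabulated offline rather than computed on the integer-only device. Both the prediction routine and the certification routine share this structure, the former being the certification routine with the radius step dropped, so it suffices to treat certification.

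First I would establish that noise generation is integer-only. Sampling $\n\sim\disG(0,\sigma^2\bm{I}_d)$ decomposes into $d$ independent draws from the one-dimensional discrete Gaussian $\disG(0,\sigma^2)$ on $\gH$. Taking $\sigma^2$ rational, the exact discrete Gaussian sampler of Canonne et al.~\cite{canonne2020discrete} produces such a draw using only a stream of unbiased random bits together with integer comparisons, additions, and multiplications (via exact $\mathrm{Bernoulli}(\exp(-\gamma))$ sampling for rational $\gamma$ and discrete-Laplace rejection); in particular it never evaluates a transcendental function in floating point. Hence $\x+\n\in\gH^d$ is obtained by integer arithmetic only.

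Next I would invoke the integer-arithmetic-only quantization scheme of Jacob et al.~\cite{jacob2018quantization} for the base classifier $f$. Given the integer input $\x+\n$, each layer's matrix or convolution accumulation is an integer multiply--accumulate, and the real-valued rescaling between layers is implemented as a fixed-point integer multiplication followed by a bit shift, so the forward pass $f(\x+\n)$ and its output label are produced with no floating-point operation. Tallying the per-label votes, selecting the top label $\hat{c}_A$, and counting $n_2$ are plain integer increments and comparisons. This shows that the entire body of the Monte-Carlo loop is integer-only, which is exactly the dominant, $N_1+N_2$-times-repeated cost.

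The hard part will be the statistical post-processing, namely the lower confidence bound $\underline{p_A}$ for ${\tt Binomial}(N_2,n_2)$, the test $\underline{p_A}>\tfrac12$, and the radius $\sigma\Phi^{-1}_{\disG}(\underline{p_A})$, all of which are genuinely transcendental and would naively require floating point. I would resolve this by noting that each of these quantities is a deterministic function of only the integer pair $(N_2,n_2)$ and the fixed parameters $\sigma,\alpha$. Since $n_2$ ranges over the finite set $\{0,1,\dots,N_2\}$, one precomputes offline a lookup table mapping each admissible $n_2$ to (i) the accept/abstain decision and (ii) the certified radius stored as an integer in the chosen fixed-point format. At deployment the device merely indexes this table with the integer $n_2$, so no floating-point arithmetic is ever performed on-device. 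Combining the three steps establishes that both the repeated inference and the final certification (and a fortiori prediction) run with integer-arithmetic-only operations, proving the claim.
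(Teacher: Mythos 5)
Your proof is correct and follows the same overall strategy as the paper's: decompose the procedure into the Monte-Carlo loop plus statistical post-processing, argue the loop is integer-only, and dispose of the transcendental quantities by precomputed lookup. The difference is in what gets tabulated and in how carefully the loop itself is justified. The paper tabulates the discrete Gaussian CDF $\Phi_{\disG}$ (a table of $|\gH|$ entries) and inverts it by lookup, then avoids square roots by returning the \emph{squared} radius so every output stays in $\gH$; it says nothing about how the noise $\n\sim\disG(0,\sigma^2\bm{I}_d)$ is sampled integer-only, nor about how the Clopper--Pearson-type lower bound $\underline{p_A}$ for ${\tt Binomial}(N_2,n_2)$ is computed without floating point. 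You fill both of those gaps explicitly---the exact sampler of \cite{canonne2020discrete} for the noise and the scheme of \cite{jacob2018quantization} for the forward pass---and your table, indexed by $n_2\in\{0,\dots,N_2\}$ and mapping directly to the accept/abstain decision and a fixed-point radius, absorbs the binomial bound, the threshold test, and $\Phi^{-1}_{\disG}$ in a single stroke, which makes your argument more self-contained than the paper's (at the cost of an $N_2+1$-entry table versus the paper's $|\gH|$-entry one, and of committing to a fixed-point rather than exactly integer radius). One small inaccuracy: the paper's \textbf{Prediction} routine (Algorithm \ref{alg.prediction}) is not literally certification with the radius dropped---it uses a two-sided binomial $p$-value test on the pair $(n_A, n_A+n_B)$---but your tabulation idea still applies there, e.g.\ by precomputing, for each total count $m$, the smallest $n_A$ that passes the test, so this does not affect the conclusion.
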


\begin{proof}
    The overall prediction and certification procedure consist of the following integer-arithmetic only steps:
    \begin{enumerate}
        \item Repeat the inference $N,N_1,N_2$ times to estimate $\hat{c}_A$, $\hat{c}_B$ and counts, which evaluates $\x+\bm{\delta}$ with quantized neural network. 
        \item Compute $\Phi^{-1}_{\disG}(\underline{p_{A}})$. Since the distribution is discrete, the cumulative density function has exactly $|\gH|$ elements, which means the cumulative density function can be precomputed and its inverse can be obtained by looking up during inference.
        \item Return the squared $\ell _2$ norm. Since we do not take least square norm, the squared $\ell _2$ norm is closed within $\gH$. 
    \end{enumerate}
    Thus, the robust prediction and certification of the IntRS mechanism can be implemented with integer-arithmetic-only operations.
\end{proof}

\subsection{Practical Training: Discrete Noise and Quantization Aware Training}

In this section, we present how to incorporate Quantization Aware Training (QAT) with our randomized mechanism. 

Quantization replaces floating-point number representations with low precision fixed-point representations. Since the quantization scheme replaces floating-point weights/activations by N (N=256 for 8-bits) fixed-point numbers, if we directly quantize the pre-trained network for inference, rounding errors accumulate, leading to a significant drop in the classification performance. A solution to rectify these drifting errors is to quantize the network during training. However, representing parameters by integers becomes a challenge in backward pass since direct optimization over a discrete domain generates null gradients. To address this issue, QAT keeps carrying out computation using 32-bits floating-point arithmetic while simulating quantization error. In particular, during the forward pass, weight is quantized into integers and then convoluted with the input, which is represented with floating numbers. On the other hand, backpropagation is the same as that of the full precision model.

In order for the base classifier $f$ to classify the labeled examples correctly and robustly, $f$ needs to classify them with their true labels consistently. Thus, we train $f$ with noisy images, which is also adopted by almost all certified robustness approaches. To combine with the QAT scheme, in practice, we train the model with discrete Gaussian data augmentation at variance $\sigma$, which is the same as used for prediction and certification. For each training data point, we randomly draw noise from discrete Gaussian distribution. We follow the sampling strategy in \cite{canonne2020discrete}. Note that although the algorithm for sampling from discrete Gaussian distribution runs $O(1)$ on average, since we need to add noise onto every channel of every pixel in the image, sampling noise for a batch of $32 \times 32$ images takes more than 3 mins. To accelerate training speed, we first sample noises for a batch and store these noises in advance. During training, we randomly shuffle and add the noise to the input.

%-------------------------------------------------------------------------

\section{Experiment}

 \begin{figure*}
    \centering
    \includegraphics[width=1\linewidth]{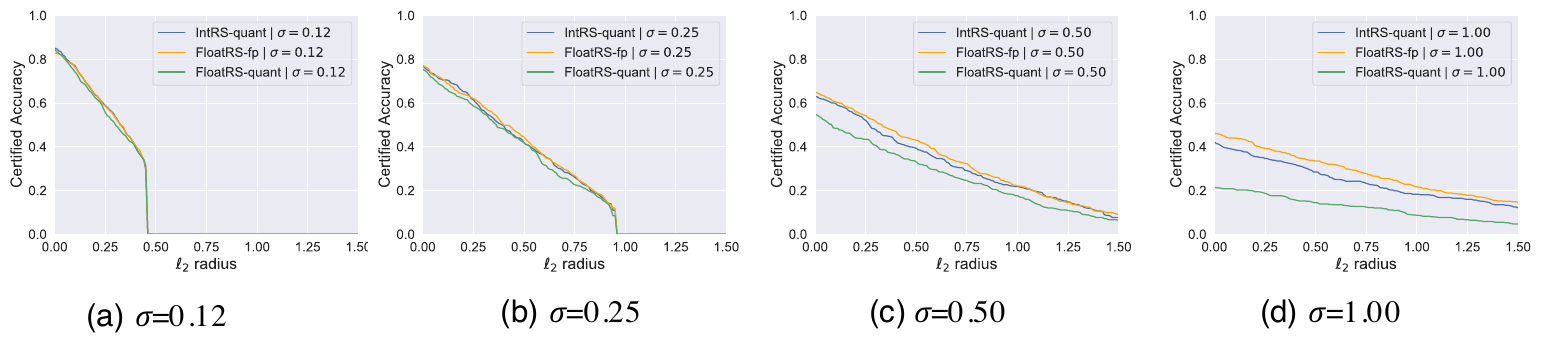}
    %  \vspace{-2em}
    \caption{Certified accuracy of IntRS-quant trained CIFAR-10 classifiers vs Float-fp and FloatRS-quant. (Blue,Orange,Green) per $\sigma$. }
    \label{fig:CA_CIFAR}
    %  \vspace{-1.25em}
\end{figure*}

\begin{figure*}
    \centering
    \includegraphics[width=1\linewidth]{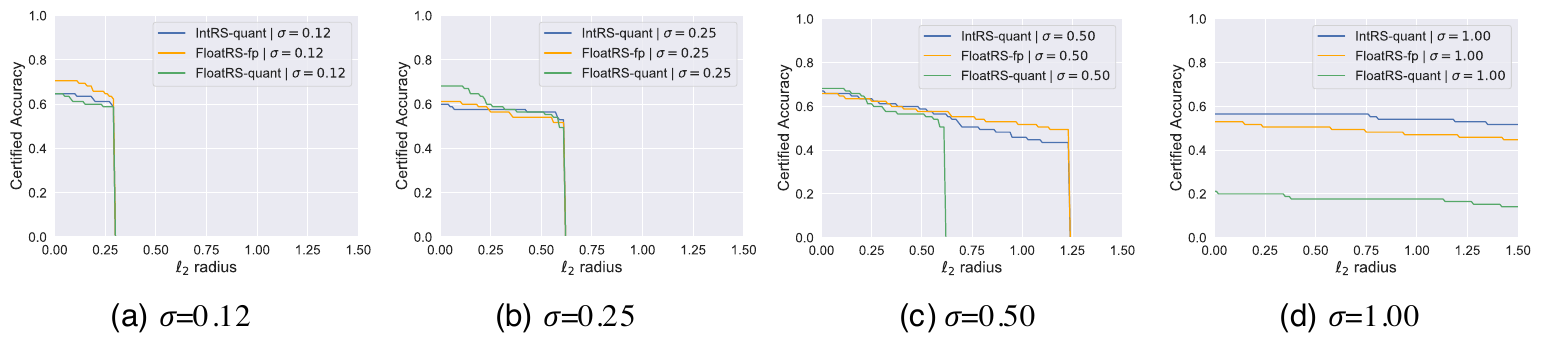}
    %  \vspace{-2em}
    \caption{Certified accuracy of IntRS-quant trained Caltech-101 classifiers vs Float-fp and FloatRS-quant. (Blue,Orange,Green) per $\sigma$. }
    \label{fig:CA_Caltech}
    %  \vspace{-1.25em}
\end{figure*}

 \begin{figure}
    \centering
    \includegraphics[scale=1.15]{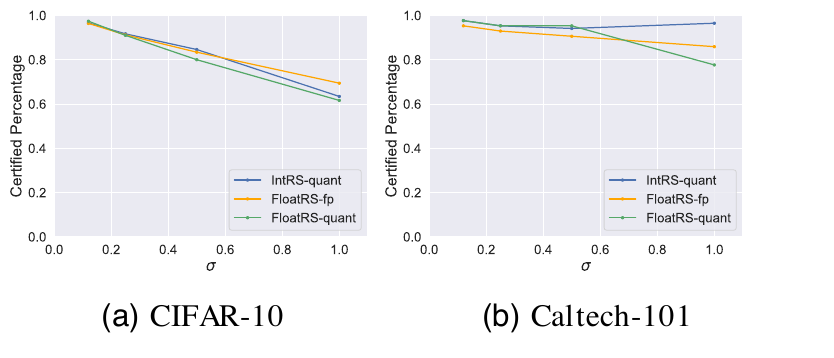}
    % \vspace{-1em}
    \caption{Certified Percentage of IntRS-quant trained Caltech-101 classifiers vs Float-fp and FloatRS-quant per $\sigma$ over CIFAR-10 (Left) and Caltech-101 (Right).}
    \label{fig:CR}
    % \vspace{-1em}
\end{figure}

\begin{figure*}
    \centering
    \includegraphics[scale=1.15]{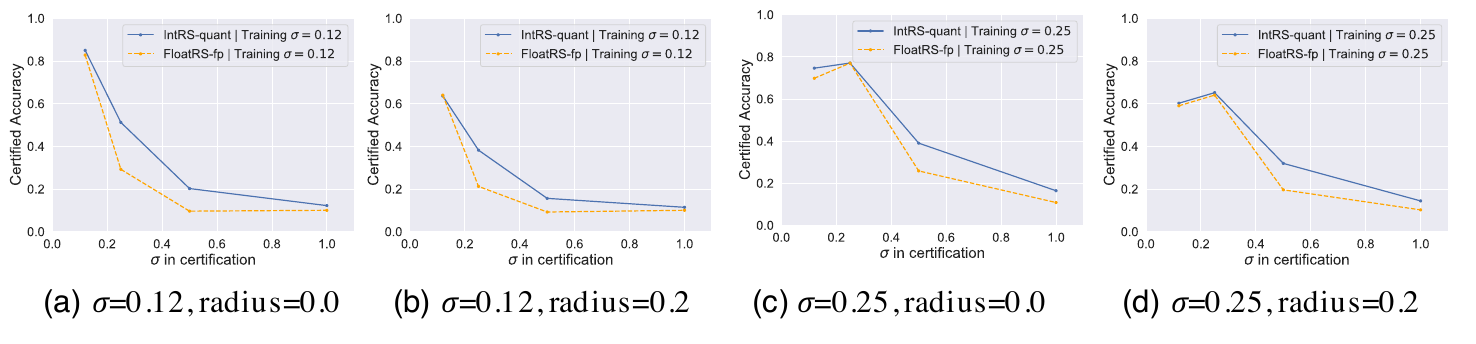}
    % \vspace{-1em}
    \caption{Comparison between IntRS-quant and FloatRS-fp on CIFAR-10. Each model is trained with a fixed $\sigma$ while certifying using different $\sigma$. We perform experiments under train $\sigma=0.12,0.25$ and report certified accuracy for radius $= 0.0, 0.2$.}
    \label{fig:inference}
    % \vspace{-1em}
\end{figure*}
\begin{figure*}
    \centering
    \includegraphics[scale=1.15]{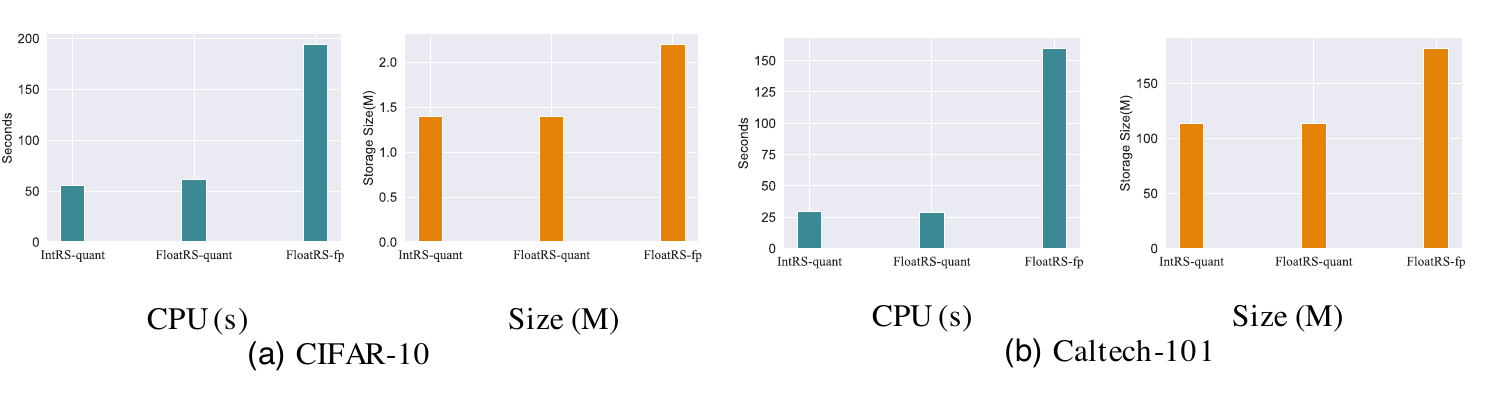}
    \vspace{-1em}
    \caption{Computational efficiency between IntRS-quant and FloatRS-fp on CIFAR-10 and Caltech-101. }
    \label{fig:efficiency_evaluation}
    % \vspace{-1.75em}
\end{figure*}

\subsection{Experiment Setup}

\partitle{Datasets} We evaluate our framework on two standard image classification datasets: CIFAR-10 \cite{cifar10} and Caltech-101 \cite{fei2004learning}. 

\noindent \textbf{CIFAR-10} consists of 50,000 training images and 10,000 test images, where each image is of $32\times 32$ resolution. 

% For data pre-processing, we do horizontal flips and take random crops from images padded by 4 pixels on each side, filling missing pixels with reflections of original images.

\noindent \textbf{Caltech-101} contains 9,144 images of size $300 \times 200$ pixels in 102 categories (one of which is background). We use 101 categories for classification (without the background category). We randomly split $80\%$ for training and the remaining images for testing. 

% Following \cite{liu2018unsupervised}, all images are resized and center cropped into $224 \times 224$.

\partitle{Evaluation Metrics}
For evaluation metrics, we use certified percentage and certified accuracy\cite{cohen2019certified}, as follows:

\noindent \textbf{1) Certified Percentage (CP)} is defined as the percentage of clean inputs that satisfy the certification criteria in Theorem  \ref{thm:Certified robustness via Integer Gaus} under the corresponding $\ell_2$-norm ball, which can be formulated as 
\begin{equation}
    \frac{\sum_{i=1}^{L}|\text{Certification}(N_1,\x_i,\sigma)|}{L},
\end{equation}
where $|\text{Certification}(N_1,\x_i,\sigma)|=1$ if it does not abstain and 0, otherwise. $L$ represents the size of dataset.

% . Since $g$ is a smoothing classifier, following \cite{cohen2019certified}, we report approximate certified accuracy, defined as the fraction of the test set which \textbf{Certification} in Algorithm \ref{alg.certification.prediction} classifies correctly with a radius $R \geq r$,

\noindent \textbf{2) Certified Accuracy (CA)} at radius $r$ is defined as the fraction of the test set in which the randomized smoothing function $g$ makes a correct prediction without abstaining within an $\ell_2$ ball of radius r, which can be formulated as 
\begin{equation}
\frac{\sum_{i=1}^{L} [|\text{Certification}(N_1,\x_i,\sigma)| \text{ }\& \text{ }\hat{c}_i ==c_i \text{ } \&\text{ } R_i\geq r ] }{\sum_{i=1}^{L} | \text{Certification}(N_1,\x_i,\sigma)|},
\end{equation}
where $\hat{c}_i,R_i$ are the returned results for certification for $\x_i$. $c_i$ is $x_i$'s ground-truth label.

\partitle{Comparison Methods}
We compare our proposed IntRS method with \\
\noindent \textbf{1) FloatRS-fp:} Cohen et al. \cite{cohen2019certified} as it was the state-of-the-art provable defence to $\ell_2$-norm bounded adversarial perturbation attacks. We denote it as FloatRS-fp (short for floating-point randomized smoothing for full precision neural networks). \\
\noindent \textbf{2) FloatRS-quant:} We implement a vanilla randomized smoothing approach for quantized neural networks, which first adds continuous Gaussian noises to the images and then quantizes the perturbed images. We denote it as FloatRS-quant (short for floating-point randomized smoothing for quantized neural networks). We stress that FloatRS-quant is incompatible with a real integer-arithmetic-only device since the continuous noise injection step requires floating-point operations. 

To conduct a fair comparison, we run experiments with the same training settings when possible (i.e., use the same quantization-aware and noise-aware strategy for the corresponding neural network) and report the comparison results in metrics of CP and CA.

\noindent \textbf{Implementation Details} 
We implement our algorithm in Pytorch \cite{paszke2019pytorch}.  We consider ResNet-20 \cite{he2016deep} as the base model architecture for CIFAR-10 and ResNet-50 for Caltech-101. In addition, we adopt a model pretrained from ImageNet for Caltech-101 as the images from these two datasets are similar. On each dataset, we train multiple smoothed classifiers with different $\sigma$'s. We compress the model from FP32 representation to int8 representation for the quantized neural networks. We set batch size to 256 for CIFAR-10 and 64 for Caltech-101. The learning rate is set to 0.1 using the step scheduler and SGD optimizer. In all experiments, unless otherwise stated, we set $\alpha=0.001$ for \textbf{Prediction} in Algorithm \ref{alg.certification.prediction}, which corresponds to $99.9\%$ confidence interval, i.e. there is at most a $0.1\%$ chance that \textbf{Certification} falsely certifies a non-robust input. In \textbf{Certification} algorithm,  we use
$N_1 = 100$ and $N_2 =
100,000$ in CIFAR-10. We set $N_2 =
1000$ for Caltech-101 during efficiency comparison. We certify the entire CIFAR-10 testset and 100 images for Caltech-101.

\subsection{Certified Accuracy and Certified Percentage on Adversarially Perturbed Examples}

 Figure \ref{fig:CA_CIFAR} and Figure \ref{fig:CA_Caltech} plot CA obtained by smoothing with different $\sigma$ over a range of $\ell_2$ at radius $r$. The blue, orange, green lines are the CA on adversarial examples for IntRS-quant, Float-fp, and FloatRS-quant, respectively. Note that CA drops to zero beyond a certain maximum point for each $\sigma$ as there is an upper bound to the radius we can certify. This maximum radius is achieved when $\underline{p_a}$ reaches its maximum and all $N_2$ samples are classified as the same class), where $\underline{p_a}$ and $N_2$ are defined in Algorithm \ref{algoritm:certification}. Figure \ref{fig:CR} reports CP for each dataset. It is observed that IntRS-quant achieves comparable CA and CP on both datasets with mild degradation in CA for a few cases compared with the Float-fp model. This is because quantization reduces the bits per weight and leads to errors by these approximations during computations. Moreover, compared with FloatRS-quant, our algorithm can achieve better performance, especially when $\sigma$ is large, which means IntRS is not only capable of being deployed on a wider selection of integer-arithmetics-only devices but also better in prediction accuracy.

\partitle{Trade-off between Accuracy and Robustness}
 As mentioned by Cohen et al. \cite{cohen2019certified} and \cite{wong2018provable}, we also observe from Figure \ref{fig:CA_CIFAR} and Figure \ref{fig:CA_Caltech} that hyperparameter $\sigma$ controls a trade-off between robustness and accuracy. That is, when $\sigma$ is low, small radii can be certified with high accuracy, but large radii cannot be certified at all; when $\sigma$ is high, larger radii can be certified, but smaller radii are certified at lower accuracy. This phenomenon can also be verified in Figure \ref{fig:CR} with a trade-off between CP and $\sigma$ where we observe higher $\sigma$ leads to lower CP. 

\partitle{Effects of Mismatching Noise Magnitudes for Data Augmentation in Training and Smoothing in Inference} 
According to \cite{cohen2019certified}, they train the base classifier with Gaussian noise data augmentation at variance $\sigma^2$ in their experiments. If the base classifier $f$ does not see the same noise during training, it will not necessarily learn to classify $\x$ under Gaussian noise with its ground truth label. Specifically, they report CA using the same $\sigma$ as the standard deviation of Gaussian noise data augmentation for training and robust inference. However, in reality, the attacker may use different $\sigma$'s during inference time. In Figure \ref{fig:inference}, we present CA of models trained with fixed $\sigma$ under different $\sigma=\{0.12,0.25,0.5,1.0\}$ used for certification. Figure \ref{fig:inference} (a) and (b) show the result for training with data augmentation using $\sigma=0.12$ on CIFAR-10 at radius $r=0.0 \text{ and }2.0$, respectively. Figure \ref{fig:inference} (c) and (d) and show the result for training with $\sigma=0.25$. As shown in the figure, certifying using $\sigma$ different from that applied in training can result in lower CA. The classifier achieves the best CP when the training $\sigma$ equals the certification $\sigma$, and the classifier performs worse when it is certifying with smaller $\sigma$, implying that less randomness is introduced in the inference (in Figure \ref{fig:inference} (c) and (d)). We can also observe that our IntRS approach outperforms Float-fp under all different certification $\sigma$'s. This indicates that IntRS is less sensitive to the mismatch level compared with the full precision model and thus provides more flexibility during certified robust inference.

\subsection{Efficiency Evaluation}

\begin{figure}
    \centering
    \includegraphics[scale=1.15]{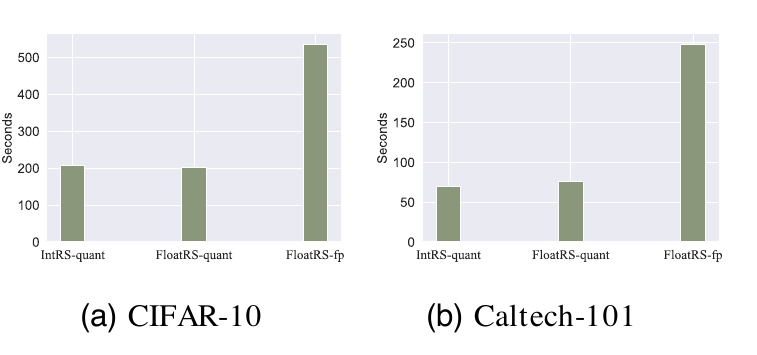}
    %  \vspace{-1em}
    \caption{Inference speed between IntRS-quant and FloatRS-fp on mobile device.}
    \label{fig:mobile_efficiency}
    % \vspace{-1em}
\end{figure}

We compare the efficiency performance on two different types of devices: general-purpose CPU and mobile device. For both types, we report the average robust inference time per image on CIFAR-10 and Caltech-101 datasets. The counts of Monte-Carlo repetitions are set to $100,000$ and $10,000$ respectively for CIFAR-10 and Caltech-101.

\partitle{Efficiency Results on General-purpose CPU} 
We test running time on an Intel-i7 CPU-powered desktop computer as an instance of the general-purpose device, which in general has more computational capabilities and power consumption than most edge devices. The results on CIFAR-10 and Caltech-101 are in Figure \ref{fig:efficiency_evaluation}. Since Caltech-101 has fewer Monte-Carlo repetitions, certifying an image on it is faster than that on CIFAR-10. The results show that IntRS-quant for the quantized model requires only 30\% of the original Float-fp for the full precision model in inference time (55.8 s v.s. 194.4 s on CIFAR-10 and 29.5 s vs. 159.9s on Caltech-101 on average). The memory cost can be roughly compared by the storage size of the saved model, which shows ours has approximately $40\%$ less storage than that of a full precision model.

\partitle{Efficiency Results on Mobile Device} Next, we test running time on an iPhone 11 as an instance of the mobile device. The results are reported in Figure \ref{fig:mobile_efficiency}. On the mobile device, it shows clear efficiency improvement of our IntRS for the quantized model over the FP32 model, where the latter costs over 8 mins to robustly predict a single CIFAR-10 image, which makes it impractical to maintain user engagement. Finally, we note that many edge devices are subject to stricter resource and power restrictions, on which the efficiency improvement of randomized smoothing-type techniques made possible by IntRS can be more significant.

\section{Conclusion}

In this paper, we proposed an integer-arithmetic-only randomized smoothing mechanism called IntRS, which has made the certified robustness inference for quantized neural networks ever possible for the first time. We rigorously analyzed its certified robustness property based on the discrete Neyman-Pearson lemma when specified to the discrete Gaussian noise. In practice, we incorporated quantization- and discrete data augmentation- aware training, as well as Monte-Carlo-based practical prediction and certification. We evaluated its effectiveness with modern CNN architectures (ResNet-20 and ResNet-50) on two distinct datasets: CIFAR-10 and Caltech-101. We demonstrated through extensive experiments that IntRS can obtain comparable certified accuracy and certified percentage. More importantly, IntRS makes inference efficient on edge devices. Compared to inferences using models with floating point operations, IntRS requires 40\% times less storage size and gains $4\times$ to $5\times$ acceleration for inference on general-purpose CPUs and mobile devices.

\section{Acknowledgement}
This research has been funded in part by NSF CNS-1952192, NIH CTSA Award UL1TR002378, and Cisco Research Award \#2738379, NSF grants IIS-1910950 and CNS-2027794 and unrestricted cash gifts from Google and Microsoft. Any opinions, findings, and conclusions or recommendations expressed in this material are those of the author(s) and do not necessarily reflect the views of the sponsors.

% \begin{figure*}\
%     \centering
%     \includegraphics{LaTeX/inference_ill.png}
%     \caption{Caption}
%     \label{fig:my_label}
% \end{figure*}

%-------------------------------------------------------------------------
\newpage
{\small
\bibliographystyle{ieee_fullname}
\bibliography{egbib}
}

\onecolumn

\begin{appendices}

\begin{center}
\Large{\bf{Integer-arithmetic-only Certified Robustness for Quantized Neural Networks: Supplementary Material}}\vspace*{24pt}\\
\vspace*{12pt}
\end{center}

\section{Additional Details of Training}

\noindent \textbf{Dataset Details} 

\noindent\textbf{CIFAR-10} \cite{cifar10} consists of 50,000 training images and 10,000 test images, where each image is of $32\times 32$ resolution. For data pre-processing, we do horizontal flips and take random crops from images padded by 4 pixels on each side, filling missing pixels with reflections of original images. \\
\textbf{Caltech-101} \cite{fei2004learning} is a more challenging dataset than CIFAR-10 since it contains 9,144 images of size $300 \times 200$ pixels in 102 categories (one of which is background). We use 101 categories for classification (without the background category). We randomly split $80\%$ for training and the remaining images for testing. Following \cite{liu2018unsupervised}, all images are resized and center cropped into $224 \times 224$. We train on the training dataset and test on the testing for both dataset.

\noindent \textbf{Training details} 

\noindent On CIFAR-10, we trained using SGD on one GeForce RTX 2080 GPU. We train for 90 epochs. We use a batch size of 256, and an initial learning rate of 0.1 which drops by a factor of 10 every 30 epochs. On Caltech-101 we trained with SGD on one TITAN RTX GPU. We train for 90 epochs. We use a batch size of 64, and an initial learning rate of 0.1 which drops by a factor of 10 every 30 epochs. The models used in this paper are similar to those used in Cohen et al. \cite{cohen2019certified} except we use a smaller model on CIFAR10. On CIFAR-10, we used a 20-layer residual network from \href{https://github.com/bearpaw/pytorch-classification}{https://github.com/bearpaw/pytorch-classification}. On Caltech-101 our base classifier used the pretrained ResNet-50 architecture provided in torchvision. 

\section{Additional Details and Results of Figure \ref{fig:attack}: The Demonstration Example}

\noindent For Figure \ref{fig:attack} in the paper, we use a well-studied adversarial perturbation attack method: projected gradient descent (PGD) to find adversarial examples against the base classifier $f$ and assess the performance of the attack on full-precision model and quantized model. We set iterations equal to 7 and vary $\varepsilon$ which is the maximum allowed $l_\infty$ perturbation of the input from $0.001$ to $0.05$. Here we present the adversarial examples we found under different $\varepsilon$ in Figure \ref{fig:pgd_small}. For $\varepsilon =0.05$, the adversarial examples generated by PGD attack is visually indistinguishable from the original image, but completely distorts both the full-precision and quantized classifiers' prediction. 

\begin{figure}[H]
    \centering
    \includegraphics[width=7cm]{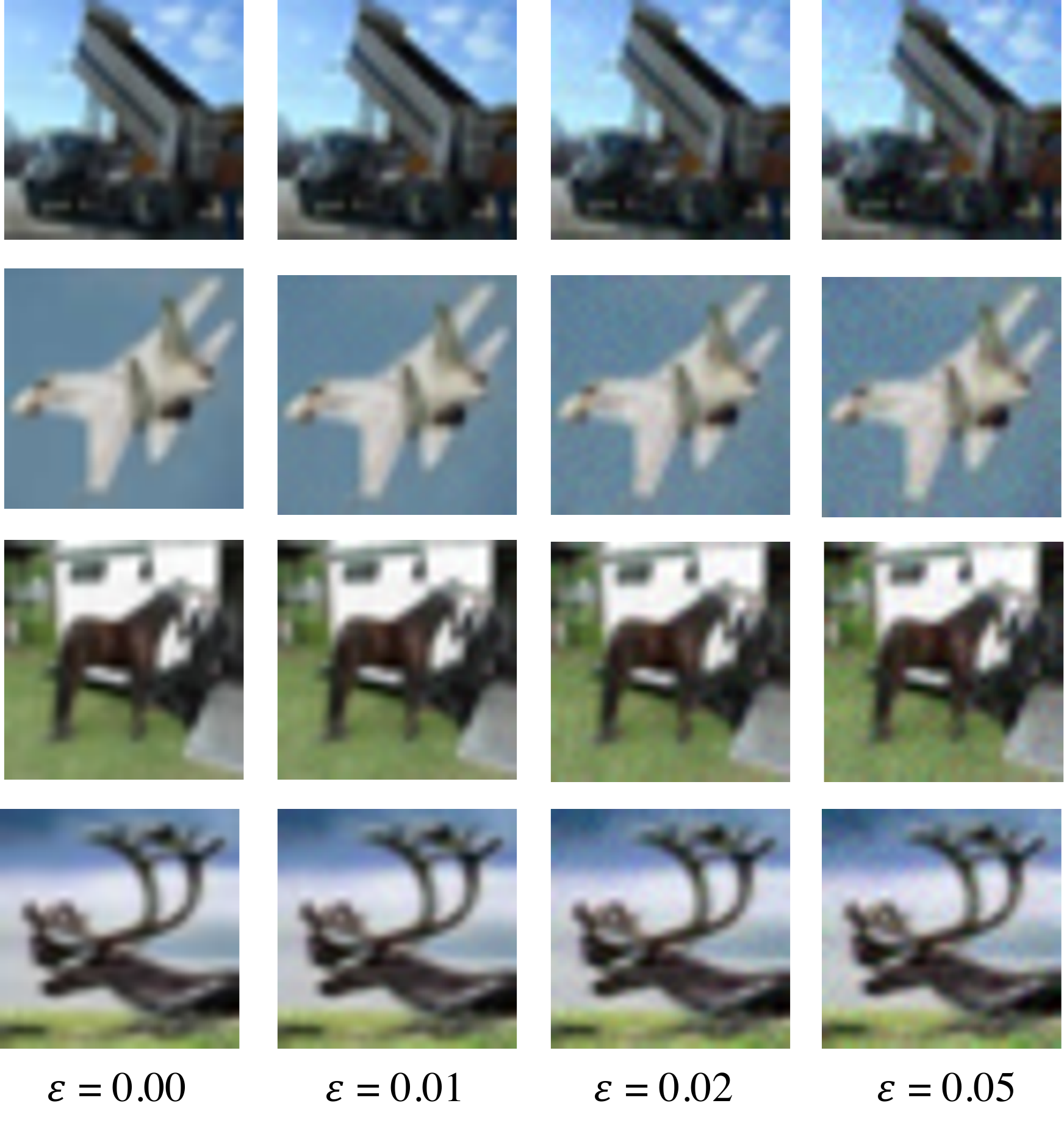}
    \caption{CIFAR-10 adversarial images corrupted generated by PGD attack with varying levels of perturbations}
    \label{fig:pgd_small}
\end{figure}

\section{Practical Prediction and Experiment Results}

\noindent Here we describe how to get the smoothed classifier’s prediction. We use the same prediction algorithm as in \cite{cohen2019certified}. \textbf{Prediction} draws n samples of $f(\x+\n)$ and return the class as its predicted label which appeared much more often than any other class. If such class doesn't exist, \textbf{Prediction} will abstain. The pseudocode is in Algorithm \ref{alg.prediction}. 

\noindent We also analyze the effect of the number of Monte-Carlo samples n in \textbf{Prediction} on quantized model. Table \ref{table:prediction}  shows the performance of \textbf{Prediction} as the number of Monte Carlo samples n is varied between 100 and 10000 on CIFAR-10. When $N$ increases, the time spent on \textbf{Prediction} also increases. We observe from Table \ref{table:prediction} that when $n$ is small, the smooth classifier is more likely to make abstentions for both full-precision (FloatRS-fp) and quantized (IntRS-quant) model.

\begin{algorithm}[h]
\caption{Monte-Carlo estimation and aggregated evaluation for certified robust prediction}
\label{alg.prediction}
\begin{algorithmic}[1]
\REQUIRE Base function $f(\cdot)$, inference sample $\x$, Gaussian noise std $\sigma$, repeated number $N$,  and confidence level $\alpha$.\\
 \textbf{Prediction:}
\STATE Repeat N inferences on $f(\x+\n)$, where $\n\sim \mathcal{N}(0,\sigma^2 \bm{I}_d)$.
\STATE Collect prediction results: $(n_A,\hat{c}_A):$ highest prediction count and its label; $(n_B,\hat{c}_B):$ second highest prediction count and its label;
\IF{Binomial $p$-value test of given $n_A,n_A+n_B$ is no greater than 0.5}
\STATE \textbf{Return} $\hat{c}_A$;
\ELSE \STATE ABSTAIN
\ENDIF
% \STATEx \\

\end{algorithmic}
\label{algoritm:prediction}
\end{algorithm}

\begin{table}[!h]
\caption{Performance of Prediction when $n$ is varied. The column presents the result on CIFAR-10 and set $\sigma=0.25, \alpha=0.001$ The column is "correct'' if Prediction returns the label without abstention and the labels matches with the ground-truth label}
\begin{tabularx}{\textwidth}{|X|X|X|X|X|X|}
% \toprule
\hline
\multicolumn{3}{|c|}{FloatRS-fp} & \multicolumn{3}{|c|}{IntRS-quant}  \\
\hline
n &  correct &  abstain & n &  correct &  abstain \\
\hline
    %   &          &          \\
% \midrule
100    &     0.74 &     0.16  &   100    &   0.73 &     0.15 \\
1000   &     0.79 &     0.03  &   1000   &   0.77 &     0.05 \\
10000  &     0.81 &     0.02  &   10000  &   0.80 &     0.02 \\
100000 &     0.82 &     0.00  &   100000 &   0.80 &     0.00 \\
\hline
% \bottomrule
\end{tabularx}

\label{table:prediction}
\end{table}

\newpage
\section{Additional Experiments with Different Types of Adversarial Perturbation Attacks}

\noindent In this appendix, we use one of the strongest attacks (i.e., projected gradient descent (PGD)) under $\ell_2$ abll to generate adversarial perturbations and evaluate \textbf{Prediction} performance. For \textbf{Prediction}, we set $n=1000,\alpha=0.001,\sigma=0.25$. For PGD, we set 20 iterations and vary $\varepsilon= \{0.0,0.12,0.25,0.50,1.00\}$. Here $\varepsilon$ is the maximum allowed $\ell_2$ perturbation of the input. Figure \ref{fig:pgd_prediction_acc} demonstrates the results of prediction accuracy on adversarial examples of CIFAR-10 on full-precision model and our quantized model.

\begin{figure}[ht]
    \centering
    \includegraphics[width=8cm]{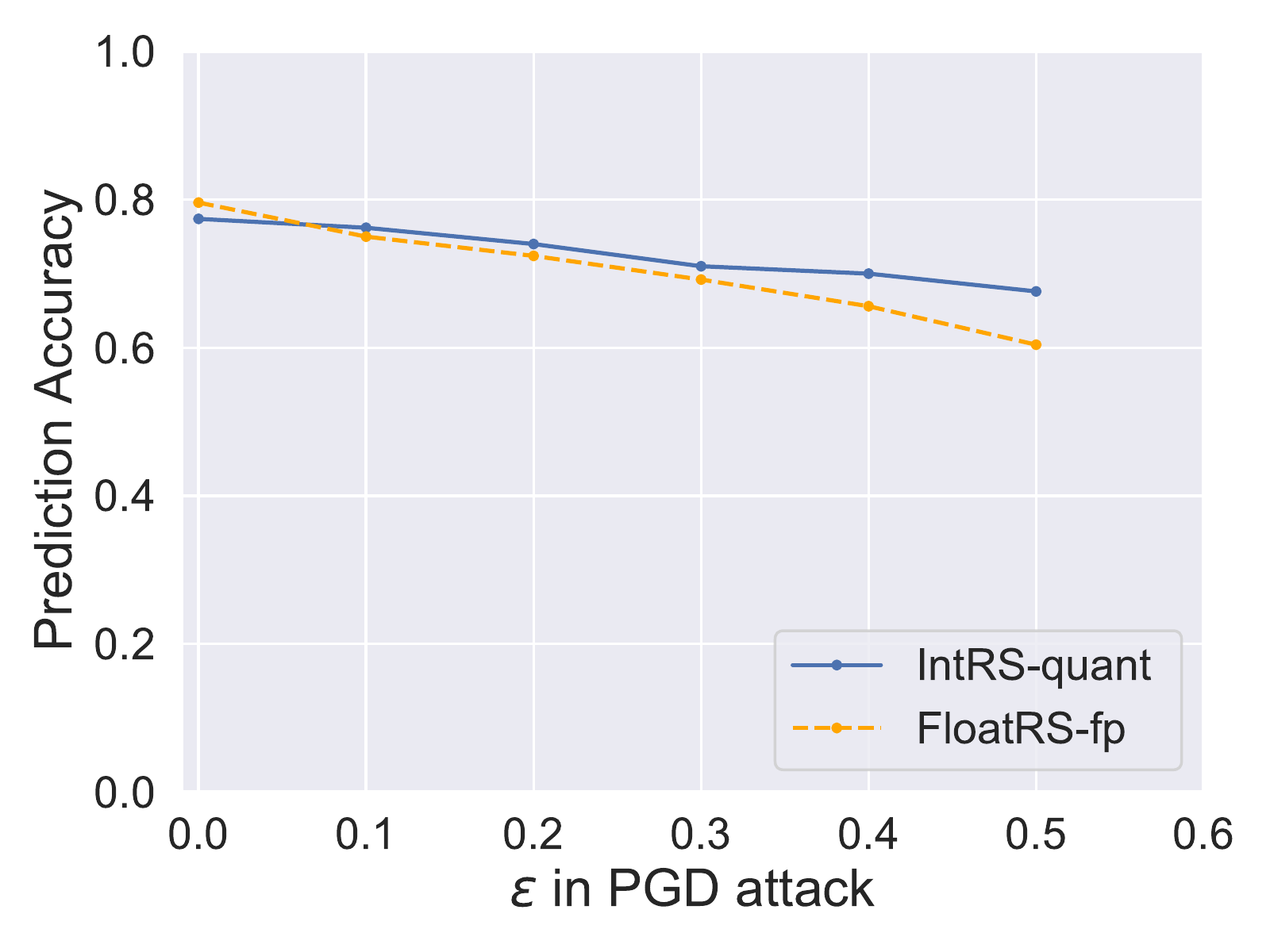}
    \caption{Prediction accuracy on CIFAR-10 adversarial examples of FloatRS-fp and IntRS-quantized model.}
    \label{fig:pgd_prediction_acc}
\end{figure}

\section{Effect of the Confidence Level Parameter $\bm{\alpha}$}

\noindent  In this section, we show the effect of confidence level parameter $\alpha$ on certified accuracy on the full-precision model and our quantized model. We can observe that the certified accuracy of each model has not been vastly affected by choice of $\bm{\alpha}$.

\begin{figure}[ht]
    \centering
    \includegraphics{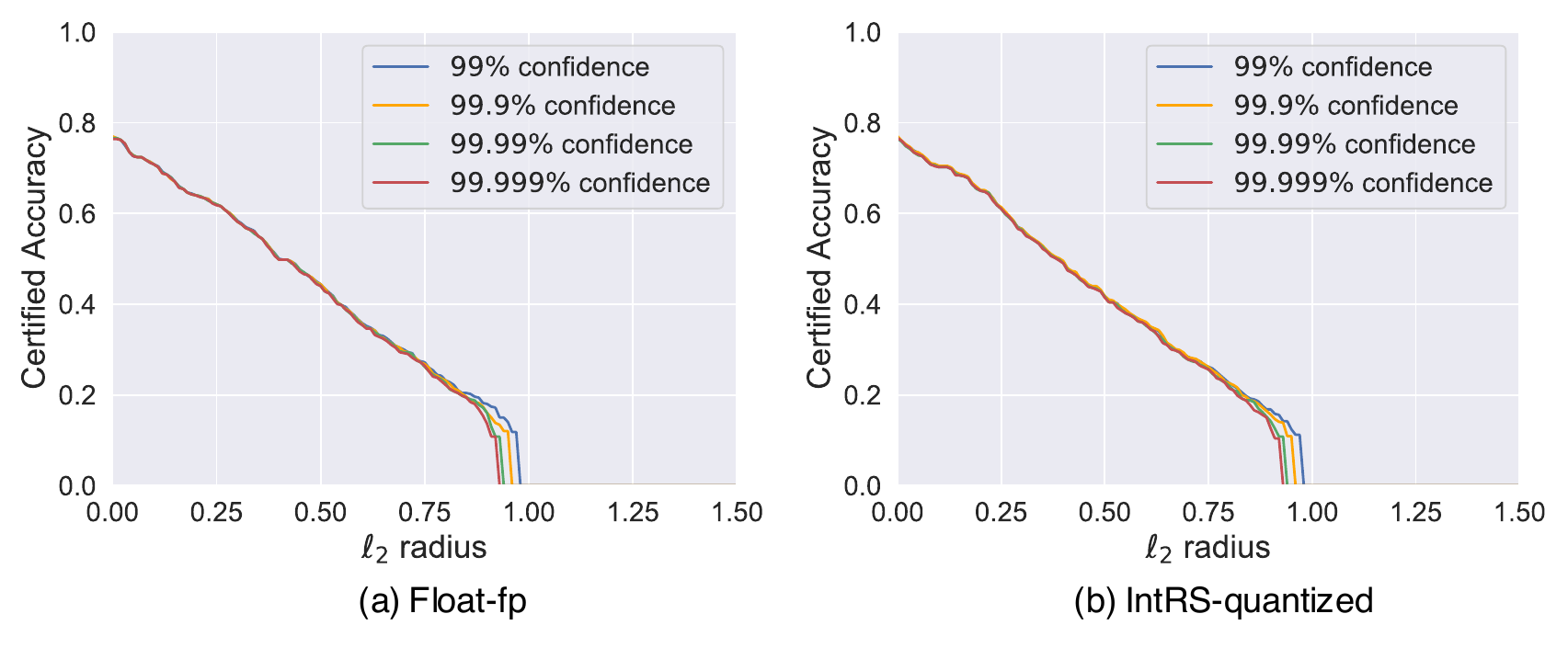}
    \caption{Certified accuracy of varying $bm{\alpha}$. The experiment is performed on CIFAR-10 with $\sigma=0.25$}
    \label{fig:vary_alpha}
\end{figure}

\pagebreak

\section{Detailed Results on Dataset: Report Table}

\noindent In Table \ref{table:cifar10}, \ref{table:caltech101}, we summarize the certified accuracy under different noise level $\sigma$ at different radius $r$. In Table \ref{table:vary_sigma_0.12}, \ref{table:vary_sigma_0.25}, we vary certification noise while holding training noise fixed at $\sigma=0.12,0.25$ on CIFAR-10 to evaluate the effects of Gaussian noise for training base classifier  $f$ on certification performance. Note for the quantized model, the accuracy of base model $f$ would be slightly lower than that of the full-precision model. Our goal is to achieve comparably certified accuracy for IntRS-quant compared with FloatRS-fp model.

\begin{table}[ht]

\caption{Certified test accuracy on CIFAR-10 with different $\sigma$. Each column represents the certified accuracy at different radius $r$}
\begin{tabularx}{\textwidth}{X|X X X X X X}
\hline
 FloatRS-fp & $r = 0.25$& $r = 0.5$& $r = 0.75$& $r = 1.0$& $r = 1.25$& $r = 1.5$\\
 \hline
% \midrule
  $\sigma = 0.12$ & 0.59 & 0.00 & 0.00 & 0.00 & 0.00 & 0.00\\
  $\sigma = 0.25$ & 0.62 & 0.44 & 0.27 & 0.00 & 0.00 & 0.00\\
  $\sigma = 0.50$ & 0.54 & 0.43 & 0.32 & 0.22 & 0.15 & 0.09\\
  $\sigma = 1.00$ & 0.39 & 0.33 & 0.28 & 0.22 & 0.18 & 0.15\\
% \hline
\hline
    IntRS-quant   & $r = 0.25$& $r = 0.5$& $r = 0.75$& $r = 1.0$& $r = 1.25$& $r = 1.5$\\
    \hline
% \midrule
  $\sigma = 0.12$ & 0.59 & 0.00 & 0.00 & 0.00 & 0.00 & 0.00\\
  $\sigma = 0.25$ & 0.61 & 0.42 & 0.26 & 0.00 & 0.00 & 0.00\\
  $\sigma = 0.50$ & 0.52 & 0.39 & 0.29 & 0.22 & 0.15 & 0.08\\
  $\sigma = 1.00$ & 0.35 & 0.28 & 0.23 & 0.18 & 0.16 & 0.12\\
%   \hline
  \hline
  FloatRS-quant & $r = 0.25$& $r = 0.5$& $r = 0.75$& $r = 1.0$& $r = 1.25$& $r = 1.5$\\
  \hline
% \midrule
  $\sigma = 0.12$ & 0.56 & 0.00 & 0.00 & 0.00 & 0.00 & 0.00\\
 $\sigma = 0.25$ & 0.59 & 0.42 & 0.23 & 0.00 & 0.00 & 0.00\\
  $\sigma = 0.50$ & 0.43 & 0.33 & 0.25 & 0.18 & 0.11 & 0.06\\
  $\sigma = 1.00$ & 0.19 & 0.14 & 0.12 & 0.09 & 0.07 & 0.05\\
  \hline
\end{tabularx}
\label{table:cifar10}    
\end{table}

\begin{table}[ht]
    
\caption{Certified test accuracy on Caltech-101 with different $\sigma$ }
\begin{tabularx}{\textwidth}{p{2cm}|X X X X X X X X}

\hline
 FloatRS-fp & $r = 0.25$& $r = 0.5$& $r = 0.75$& $r = 1.0$& $r = 1.25$& $r = 1.5$& $r = 1.75$& $r = 2.0$\\
% \midrule
\hline
  $\sigma = 0.12$ & 0.65 & 0.00 & 0.00 & 0.00 & 0.00 & 0.00 & 0.00 & 0.00\\
  $\sigma = 0.25$ & 0.56 & 0.54 & 0.00 & 0.00 & 0.00 & 0.00 & 0.00 & 0.00\\
  $\sigma = 0.50$ & 0.62 & 0.58 & 0.55 & 0.52 & 0.00 & 0.00 & 0.00 & 0.00\\
  $\sigma = 1.00$ & 0.51 & 0.51 & 0.48 & 0.47 & 0.46 & 0.45 & 0.45 & 0.41\\
 \hline
IntRS-quant& $r = 0.25$& $r = 0.5$& $r = 0.75$& $r = 1.0$& $r = 1.25$& $r = 1.5$& $r = 1.75$& $r = 2.0$\\
\hline
% \midrule
  $\sigma = 0.12$ & 0.61 & 0.00 & 0.00 & 0.00 & 0.00 & 0.00 & 0.00 & 0.00\\
  $\sigma = 0.25$ & 0.58 & 0.56 & 0.00 & 0.00 & 0.00 & 0.00 & 0.00 & 0.00\\
  $\sigma = 0.50$ & 0.64 & 0.59 & 0.51 & 0.46 & 0.00 & 0.00 & 0.00 & 0.00\\
  $\sigma = 1.00$ & 0.56 & 0.56 & 0.56 & 0.54 & 0.53 & 0.52 & 0.52 & 0.52\\
  \hline
 FloatRS-quant & $r = 0.25$& $r = 0.5$& $r = 0.75$& $r = 1.0$& $r = 1.25$& $r = 1.5$& $r = 1.75$& $r = 2.0$\\
 \hline
% \midrule
  $\sigma = 0.12$ & 0.59 & 0.00 & 0.00 & 0.00 & 0.00 & 0.00 & 0.00 & 0.00\\
  $\sigma = 0.25$ & 0.60 & 0.56 & 0.00 & 0.00 & 0.00 & 0.00 & 0.00 & 0.00\\
  $\sigma = 0.50$ & 0.61 & 0.56 & 0.00 & 0.00 & 0.00 & 0.00 & 0.00 & 0.00\\
  $\sigma = 1.00$ & 0.20 & 0.18 & 0.18 & 0.18 & 0.16 & 0.14 & 0.12 & 0.12\\
\hline
\end{tabularx}
\label{table:caltech101}
\end{table}

\begin{table}[ht]
 \caption{Certified Accuracy of varying $\sigma$ used in certification. The base model $f$ is trained on CIFAR-10 using Gaussian noise augmentation with $\sigma=0.12$}
\begin{tabularx}{\textwidth}{X|X X X X X X X}
\hline
 FloatRS-fp & $r = 0.25$& $r = 0.5$& $r = 0.75$& $r = 1.0$& $r = 1.25$& $r = 1.5$& $r = 1.75$\\
% \midrule
 \hline
  $\sigma = 0.12$ & 0.59 & 0.00 & 0.00 & 0.00 & 0.00 & 0.00 & 0.00\\
  $\sigma = 0.25$ & 0.19 & 0.11 & 0.07 & 0.00 & 0.00 & 0.00 & 0.00\\
  $\sigma = 0.50$ & 0.09 & 0.09 & 0.08 & 0.07 & 0.04 & 0.01 & 0.00\\
  $\sigma = 1.00$ & 0.10 & 0.09 & 0.09 & 0.08 & 0.06 & 0.04 & 0.03\\
  \hline
IntRS-quant & $r = 0.25$& $r = 0.5$& $r = 0.75$& $r = 1.0$& $r = 1.25$& $r = 1.5$& $r = 1.75$\\
 \hline
% \midrule
  $\sigma = 0.12$ & 0.59 & 0.00 & 0.00 & 0.00 & 0.00 & 0.00 & 0.00\\
  $\sigma = 0.25$ & 0.36 & 0.24 & 0.11 & 0.00 & 0.00 & 0.00 & 0.00\\
  $\sigma = 0.50$ & 0.15 & 0.12 & 0.11 & 0.08 & 0.05 & 0.01 & 0.00\\
  $\sigma = 1.00$ & 0.11 & 0.10 & 0.10 & 0.09 & 0.09 & 0.09 & 0.07\\
   \hline
\end{tabularx}
\label{table:vary_sigma_0.12}
\end{table}

\begin{table}[ht]

 \caption{Certified Accuracy of varying $\sigma$ used in certification. The base model $f$ is trained trained on CIFAR-10 using Gaussian noise augmentation with $\sigma=0.25$}
\begin{tabularx}{\textwidth}{X|X X X X X X X}
\hline
 FloatRS-fp& $r = 0.25$& $r = 0.5$& $r = 0.75$& $r = 1.0$& $r = 1.25$& $r = 1.5$& $r = 1.75$\\
% \midrule
\hline
  $\sigma = 0.12$ & 0.57 & 0.00 & 0.00 & 0.00 & 0.00 & 0.00 & 0.00\\
  $\sigma = 0.25$ & 0.62 & 0.44 & 0.27 & 0.00 & 0.00 & 0.00 & 0.00\\
  $\sigma = 0.50$ & 0.19 & 0.15 & 0.10 & 0.05 & 0.02 & 0.01 & 0.00\\
  $\sigma = 1.00$ & 0.10 & 0.09 & 0.08 & 0.08 & 0.06 & 0.04 & 0.02\\
  \hline
IntRS-quant & $r = 0.25$& $r = 0.5$& $r = 0.75$& $r = 1.0$& $r = 1.25$& $r = 1.5$& $r = 1.75$\\
% \midrule
\hline
  $\sigma = 0.12$ & 0.57 & 0.00 & 0.00 & 0.00 & 0.00 & 0.00 & 0.00\\
  $\sigma = 0.25$ & 0.61 & 0.42 & 0.26 & 0.00 & 0.00 & 0.00 & 0.00\\
  $\sigma = 0.50$ & 0.31 & 0.23 & 0.15 & 0.08 & 0.02 & 0.01 & 0.00\\
  $\sigma = 1.00$ & 0.14 & 0.11 & 0.10 & 0.07 & 0.03 & 0.02 & 0.01\\
   \hline
\end{tabularx}
\label{table:vary_sigma_0.25}
\end{table}

\newpage
\section{Examples of Noisy Images}

\noindent In this section, we demonstrate examples of CIFAR-10 and Caltech-101 images corrupted with varying levels of noise in Gaussian noise. Since it is hard to visualize the quantized input, we only present the input corrupted by $\mathcal{N}(0,\sigma^2)$. 

\begin{figure}[ht]
    \centering
    \includegraphics[width=10cm]{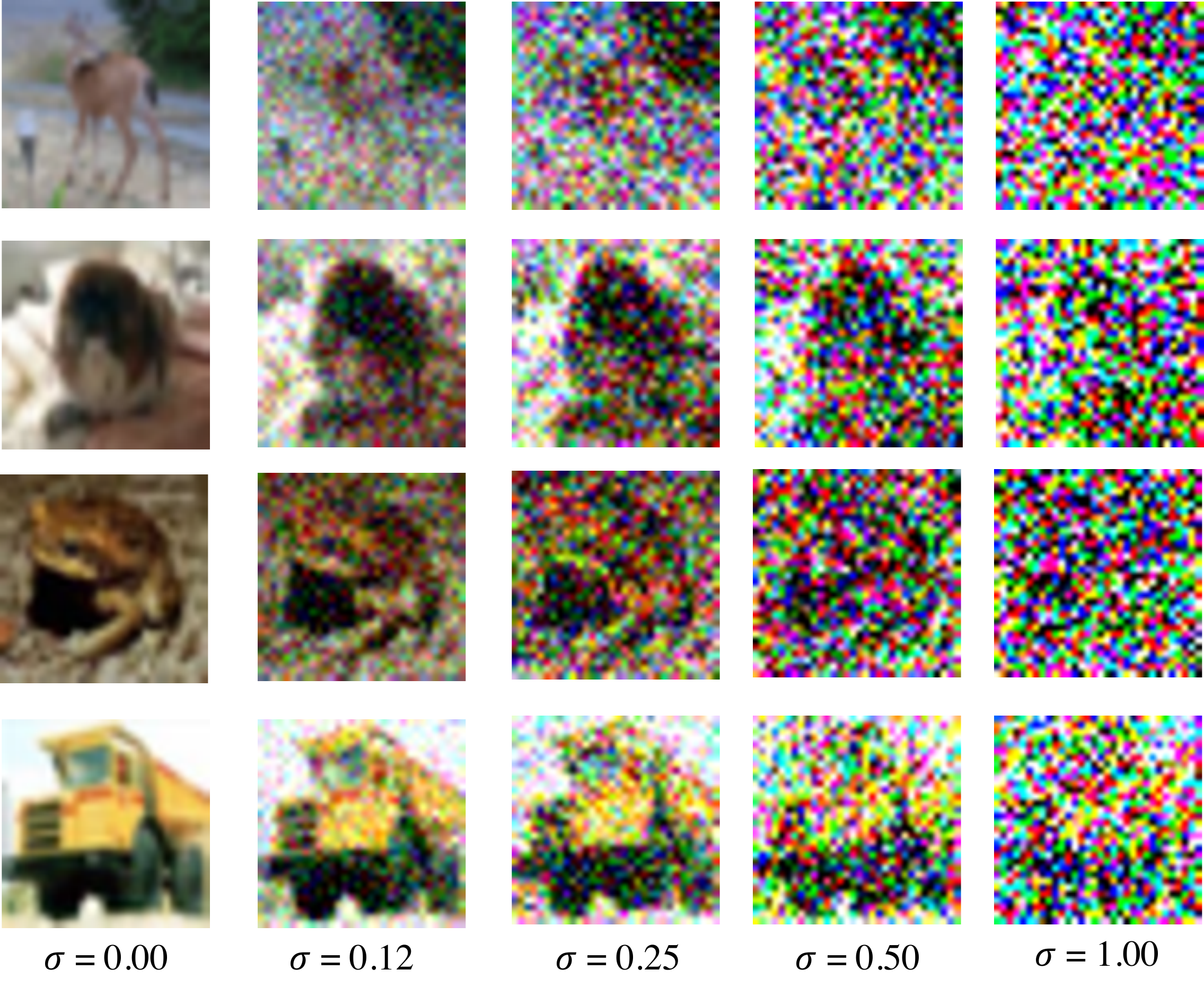}
    \caption{An illustration of CIFAR-10 images generated by adding Gaussian noise with various $\sigma$Pixel values greater than 1.0 or less than 0.0 were clipped to 1.0 or 0.0.}
    \label{fig:cifar_sd_all}
\end{figure}

\begin{figure}[ht]
    \centering
    \includegraphics[width=15cm]{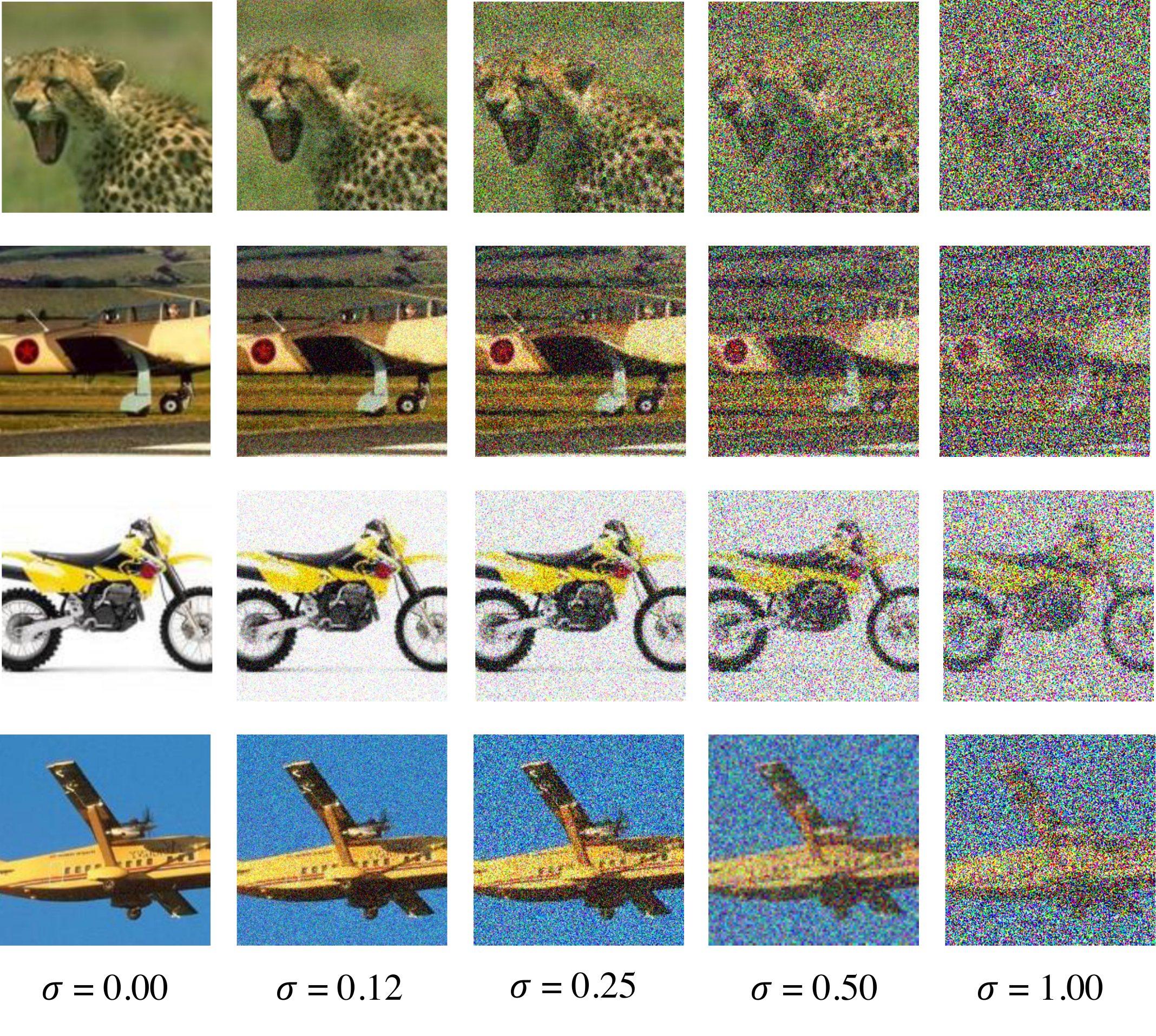}
    \caption{An illustration of Caltech-101 images generated by adding Gaussian noise with various $\sigma$. Pixel values greater than 1.0 or less than 0.0 were clipped to 1.0 or 0.0.}
    \label{fig:caltech_sd_all}
\end{figure}

\clearpage

\section{Omitted Proof}

\subsection{Proof of Proposition 3.1}

\begin{proof}
        The proof follows the Neyman-Pearson lemma (especially its form for discrete distribution in Lemma \ref{lemma.discrete.Neyman.Pearson}). We want to show that the condition in this Proposition is equivalent to the condition with respect to the likelihood ratio statistic, which takes the form:
        \begin{equation}
            \begin{split}
            \tL(\z) = \frac{\Big{(}\prod_{i=1}^d \frac{e^{-(\z_i-(\x_i+\bm{\delta} _i))^2/2\sigma ^2}}{\sum _{\h_i\in \gH}e^{-(\h_i-(\x_i+\bm{\delta} _i))^2/2\sigma ^2}}\Big{)} } { \Big{(}\prod_{i=1}^d \frac{e^{-(\z_i-\x_i)^2/2\sigma ^2}}{\sum _{\h_i\in \gH}e^{-(\h_i-\x_i)^2/2\sigma ^2}}\Big{)} }.
            \end{split} 
        \end{equation}
     Due to the discrete nature of the inference stage, we have $\bm{\delta} _i\in \gH$. Based on this fact and by $\sum _{\h_i\in \gH}e^{-(\h_i-(\x_i+\bm{\delta}_i))^2/2\sigma ^2}$ is periodic for $\bm{\delta} _i\in \gH$, we have
    \begin{equation}
        \sum _{\h_i\in \gH}e^{-(\h_i-(\x_i+\bm{\delta} _i))^2/2\sigma ^2} = \sum _{\h_i\in \gH}e^{-(\h_i-\x_i)^2/2\sigma ^2}.
    \end{equation}
    Then, we further have
        \begin{equation}
            \begin{split}
            \tL(\z) & = \frac{e^{-\sum_{i=1}^d(\z_i-(\x_i+\bm{\delta} _i))^2/2\sigma ^2}}{e^{-\sum_{i=1}^d(\z_i-\x_i)^2/2\sigma ^2}} \\
            & = e^{\frac{1}{2\sigma ^2}\sum_{i=1}^d (2\z_i\bm{\delta} _i-\delta_i-2\x_i\bm{\delta} _i)} \\
            & = e^{\frac{1}{\sigma ^2}\la \z,\bm{\delta} \ra - \frac{1}{2\sigma ^2} (\|\bm{\delta}\|_2^2+2\la \x,\bm{\delta} \ra))}.
            \end{split} 
        \end{equation}

        Thus, in order to carry out the likelihood ratio test, we have the following equivalent relationship.
        \begin{gather}
            \tL(\z) \leq \alpha \Longleftrightarrow \la \z,\bm{\delta} \ra \leq \sigma^2\ln \alpha + \frac{1}{2}(\|\bm{\delta}\|_2^2+2\la \x,\bm{\delta} \ra) \\
            \tL(\z) \geq \alpha \Longleftrightarrow \la \z,\bm{\delta} \ra \geq \sigma^2\ln \alpha + \frac{1}{2}(\|\bm{\delta}\|_2^2+2\la \x,\bm{\delta} \ra).
        \end{gather}
        The remaining follows Lemma \ref{lemma.discrete.Neyman.Pearson}.
\end{proof} 

% \subsection{Proof of Theorem 3.1}
% \begin{proof}
%   Let $\X = \x+\n \sim \disG(\x,\sigma^2\mathbb{I})$, $\Y = \x+\n + \bm{\delta} \sim \disG(\x+\bm{\delta},\sigma^2\bm{I}_d)$. To show that $g(\x+\bm{\delta})=c_A$, we first prove 
%   \begin{equation}
%   \begin{split}
%       & \prob[f(\x+\n+\bm{\delta})=c_A] > \prob[f(\x+\n+\bm{\delta})=c_B]\\
%       & \Longleftrightarrow \prob[f(\Y)=c_A] > \prob[f(\Y)=c_B],
%   \end{split}
%   \end{equation}
%   given that $\prob[f(\X)=c_A] \geq p^{lb}_{c_A} \geq p^{ub}_{c_B} \geq \prob[f(\X)=c_B]$. Notice that that $p^{lb}_{c_A} = \prob[X\in \tS_A]$, where $\tS_A = \{\z:\la\z-\x,\bm{\delta}\ra \leq \sigma \|\bm{\delta}\|_2 \Phi^{-1}_{\disG}(p^{lb}_{C_A})\}$ and $p^{ub}_{c_B} = \prob[X\in \tS_B]$, where $\tS_B = \{\z:\la\z-\x,\bm{\delta}\ra \geq \sigma \|\bm{\delta}\|_2 \Phi^{-1}_{\disG}(1-p^{lb}_{C_B})\}$. Then, by Lemma \ref{lemma.np.discrete.gaussian}, we have
%   \begin{gather}
%       \prob[f(\Y)=c_A] \geq \prob[\Y\in \tS_A]  = p^{lb}_{c_A}\\
%       \prob[f(\Y)=c_B] \leq \prob[\Y\in \tS_B]  = p^{ub}_{c_B},
%   \end{gather}
%   if $\delta$ satisfies the constraint:
%   \begin{equation}
%       \|\delta\|_2^2 < (\frac{\sigma}{2}(\Phi^{-1}_{\disG}(p_{c_A}^{lb}) - \Phi^{-1}_{\disG}(p_{c_B}^{ub})))^2.
%   \end{equation}
%   Thus, we have proved $\prob[f(\Y)=c_A] 
%   \geq p^{lb}_{c_A} \geq p^{ub}_{c_B} \geq \prob[f(\Y)=c_B]$.
% \end{proof}

\end{appendices}

\end{document}